\documentclass[11pt, letter]{article}





\usepackage[utf8]{inputenc} 
\usepackage[T1]{fontenc}    
\usepackage{hyperref}       
\usepackage{url}            
\usepackage{booktabs}       
\usepackage{amsfonts}       
\usepackage{nicefrac}       
\usepackage{microtype}      

\usepackage{amsmath,amsfonts, amssymb, amsthm}
\usepackage{hyperref}

\newcommand{\eps}{\varepsilon}

\newcommand{\cA}{\ensuremath{\mathcal{A}}}

\newcommand{\cS}{\ensuremath{\mathcal{S}}}

\newcommand{\Aset}{\ensuremath{\mathcal{A}}}

\newcommand{\R}{\ensuremath{\mathbb{R}}}
\DeclareMathOperator*{\EE}{\mathbb{E}}
\DeclareMathOperator{\E}{\mathbb{E}}

\newcommand{\pihat}{\ensuremath{\widehat{\pi}}}

\newcommand{\wh}{\widehat}





\newcommand{\var}{\mathrm{Var}}

\newcommand{\Sset}{\ensuremath{\mathcal{S}}}

\newcommand{\Mhat}{\ensuremath{\widehat{M}}}




\newcommand{\norm}[1]{\lVert#1\rVert}

\newcommand{\Mopt}{\ensuremath{M_\text{opt}}}
\newcommand{\epsopt}{\ensuremath{\epsilon_{\text{opt}}}}
\newcommand{\poly}{\mathrm{poly}}
\newcommand{\cT}{\mathcal{T}}

\theoremstyle{plain}
\newtheorem{theorem}{\textbf{Theorem}}
\newtheorem{lemma}[theorem]{\textbf{Lemma}}

\theoremstyle{definition}
\newtheorem{definition}{Definition}

\newtheorem{proposition}{Proposition}


\renewcommand{\cite}{\citep}






\newcommand{\Scal}{\mathcal{S}}

\newcommand{\Acal}{\mathcal{A}}

\newcommand{\RR}{\mathbb{R}}

\setlength{\oddsidemargin}{0.25 in}
\setlength{\evensidemargin}{-0.25 in}
\setlength{\topmargin}{-0.6 in}
\setlength{\textwidth}{6.5 in}
\setlength{\textheight}{8.5 in}
\setlength{\headsep}{0.75 in}
\setlength{\parindent}{0 in}
\setlength{\parskip}{0.1 in}

\usepackage{times}
\usepackage{enumitem}
\usepackage{fullpage}
\usepackage{algorithm, algorithmicx,algpseudocode}
\usepackage{ltablex}
\usepackage{natbib}
\usepackage[dvipsnames]{xcolor}

\renewcommand{\aa}[1]{\textsf{\color{blue} AA: #1}}

\usepackage{dsfont}
\newcommand{\wt}{\widetilde}




\title{Model-Based Reinforcement Learning with a Generative Model is Minimax Optimal}

\author{%
	Alekh Agarwal\\
	Microsoft\\
	\texttt{alekha@microsoft.com}
	\and
	Sham Kakade\\
	University of Washington\\
	\texttt{sham@cs.washington.edu}
	\and
	Lin F. Yang\\
	University of California, Los Angeles\\
	\texttt{linyang@ee.ucla.edu}\\
}

\begin{document}


\maketitle

\begin{abstract}

This work considers the sample and computational complexity of
obtaining an $\eps$-optimal policy in a discounted Markov Decision
Process (MDP), given only access to a generative model. In this model,
the learner accesses the underlying transition model via a sampling
oracle that provides a sample of the next state, when given any
state-action pair as input. This widely studied setting provides a natural abstraction which permits the investigation of sample-based planning over a long horizon, decoupled from the
complexity of exploration. In this work, we study the effectiveness of the most natural plug-in
approach to model-based planning: we build the maximum
likelihood estimate of the transition model in the MDP from
observations and then find an optimal policy in this empirical
MDP.  We ask arguably the most basic and unresolved
question in model based planning: is the na\"ive ``plug-in'' approach,
non-asymptotically, minimax optimal in the quality of the policy it
finds, given a fixed sample size? Here, the non-asymptotic regime
refers to when the sample size is sublinear in the model size.

With access to a generative model,
we resolve this question in the strongest possible sense: our main result shows that
\emph{any} high accuracy solution in the plug-in model constructed
with $N$ samples, provides an $\eps$-optimal policy in the true
underlying MDP (where $\eps$ is the minimax accuracy with $N$ samples
at every state, action pair).  In comparison, all prior (non-asymptotically)
minimax optimal results use model free approaches, such
as the Variance Reduced Q-value iteration algorithm
~\citep{NIPS2018_7765}, while the best known model-based results
(e.g. ~\cite{azar2013minimax}) require larger
sample sizes in their dependence on the planning
horizon or the state space.
Notably, we show that the model-based approach allows the use of \emph{any} efficient
planning algorithm in the empirical MDP, which simplifies algorithm
design as this approach does not tie the algorithm to the sampling procedure. The core of our analysis is a
novel ``absorbing MDP'' construction to address the statistical dependency issues that
arise in the analysis of model-based planning approaches, a
construction which may be helpful more generally.

\end{abstract}

\clearpage

\section{Introduction}
How best to plan across a long-horizon with access to an approximate
model of a Markov Decision Process? This is a fundamental question at
the heart of reinforcement learning, and understanding it is essential
to tackling even more complex challenges such as sample-efficient
exploration (see e.g. \cite{kakade2003sample, strehl2006pac,
  strehl2007probably, jaksch2010near, osband2014model,
  azar2017minimax, sidford2018variance, NIPS2018_7765}). When the
approximate model is arbitrary, these questions are studied, for
example, in the approximate dynamic programming
literature~\citep{bertsekas1976dynamic}. Before moving to
approximation questions, a more basic question is an information
theoretic one:
how many samples from the Markov Decision Process are required to yield a near optimal policy?
Our work studies this question in the generative model framework introduced in the work of
~\citet{kearns1999finite}.

In the generative model setting, the learning agent has
sampling access to a generative model of the Markov Decision Process
(henceforth MDP), and it can query the next state $s'$ sampled from
the transition process, given as input any state-action pair.  The information
theoretic question is to quantify
how many samples from the generative model are required in order to
obtain a near optimal policy; this question is analogous to the
classical question of \emph{sample complexity} in the supervised
learning setting.

Arguably, the simplest approach here is a \emph{model-based} one: the
approach is to first build the maximum
likelihood estimate of the transition model in the MDP from
observations and then find an optimal policy in this empirical
MDP.  This work seeks to address the following unresolved
question: is the na\"ive ``plug-in'' approach,
non-asymptotically, minimax optimal in the quality of the policy it
finds, given a fixed sample size? Throughout, we refer to the non-asymptotic regime
as one where the sample size is sublinear in the model size.
This work answers this question affirmatively
showing that a model based planning approach is non-asymptotically
minimax optimal.

We note that the first provably, non-asymptotically, minimax optimal
algorithm is the Variance Reduced Q-value iteration
algorithm~\citep{NIPS2018_7765}, a model free approach.  The significance of the
optimality of our model-based result is that it allows the use of
\emph{any} efficient planning algorithm in the empirical MDP, which
simplifies algorithm design due to that the
algorithm utilized need not be tied to the sampling procedure.  We now discuss our
contributions and the related work more broadly.

\subsection{Our Contributions}
There exists a large body of literature on MDPs and RL (see
e.g. \cite{kakade2003sample, strehl2009reinforcement,
  kalathil2014empirical, dann2015sample} and reference therein).  A
summary of our result relative to the prior works using a generative
model is presented in
Table~\ref{tab:literature_runtime_apx}.  Here, $\epsilon$ is a desired
accuracy parameter; $|\Scal|$ and $|\Acal|$ are the cardinalities of
the (finite) state and actions spaces; $\gamma$ is a discount
factor. We refer to $\eps$-optimal policy the one whose
discounted cumulative value in the MDP is $\eps$ close to the optimal value.

Before discussing the sample complexity of finding an $\eps$-optimal
policy, let us review the results on computing an $\eps$-optimal value
function. This refers to the problem of finding a function
$\widehat Q$ which approximates $Q^\star$ to an error of $\epsilon$ at
all states. The work of~\cite{azar2012sample} shows that for
$\epsilon \in (0,1)$ it suffices to use at most
$\wt{O}\big(\frac{|\Scal||\Acal|}{(1-\gamma)^3\epsilon^2}\big)$ calls
to the generative model in order to return an $\eps$-optimal value
function
\footnote{\label{footnote:self_bound}We conjecture that our techniques can be used to broaden the range of
$\eps$ to go beyond $\epsilon \in (0,1)$, as needed
in~\cite{azar2012sample}.
In particular, the proof of Lemma~\ref{lemma:final-bound} (used to prove Theorem~\ref{thm:sample_complexity}) uses a self-bounding approach
  which we conjecture can be used to broaden the range
  of $\eps$ to allow for  $\epsilon \in
  (0,\tfrac{1}{\sqrt{1-\gamma}}]$. We do not focus on this improvement
  in this work, as our main focus is on the value of the policy itself.}.
Furthermore,
the work of~\cite{azar2012sample} shows this sample complexity is
minimax optimal.

Obtaining an $\eps$-optimal policy (rather than just estimating the
value itself) is more subtle; na\"ively, a policy obtained in a greedy
manner from an
$\eps$-optimal value will incur a further degradation in its quality
by a factor of $1-\gamma$ \citep{singh1994upper}.
The work of ~\cite{azar2013minimax} shows that this additional error
amplification is avoidable provided that the number of samples is at
least $O(|\Scal|^2 |\Acal|)$ (see Table~\ref{tab:literature_runtime_apx}); note that such a sample size is actually
linear in the model size.

Our work avoids this error amplification and shows that for a desired accuracy threshold of
$\epsilon$, we can find an $\epsilon$-optimal policy for any
$\epsilon \in (0,\tfrac{1}{\sqrt{1-\gamma}}]$ using at most
$\wt{O}\big(\frac{|\Scal||\Acal|}{(1-\gamma)^3\epsilon^2}\big)$ samples. Our result holds for
\emph{any planning algorithm} that finds a near
optimal policy in the empirically constructed MDP. Due to existing
lower bounds~\citep{azar2012sample, NIPS2018_7765}, this bound is known to be
\emph{minimax optimal} for $\epsilon \in (0,1]$. Notably, this sample complexity is $o(S^2A)$ whenever $\epsilon^2 \geq 1/((1-\gamma)^3|\Sset|)$, meaning that we can use the model to find a near optimal policy even in sample regimes where no meaningfully accurate approximation to the actual transition probabilities can be constructed.

Prior to this work, the only other non-asymptotically minimax optimal
approach takes a different algorithmic path: \cite{NIPS2018_7765}
(also see \citet{sidford2018variance})
use a modification of the $Q$-value iteration method, with explicit
control of variance in value estimates, to obtain an optimal sample complexity for $\epsilon \in
(0,1]$. Our guarantees hold for a broader range of $\epsilon$ values (though
we conjecture that our techniques could also improve the
$\eps$ dependence in~\cite{NIPS2018_7765}. See Footnote~\ref{footnote:self_bound}.).

Importantly, our work highlights that the sub-optimality of the prior
model-based results was not due to any inherent limitation of the
approach, but instead due to a matter of analysis. As a by-product, we retain a
conceptually and algorithmically simpler solution strategy relative
to~\citet{NIPS2018_7765}.
On a more technical note, our analysis is based on a novel absorbing
MDP construction to deal with the dependence issues which arise in the
analysis of~\citet{azar2012sample, azar2013minimax}, and this argument might be more
broadly useful.

\begin{table*}[t!]
	\begin{center}
		\begin{tabular}{|p{4cm}|c|c|c|}
			\hline
			\textbf{Algorithm} & \textbf{Sample Complexity} &\textbf{$\epsilon$-Range} & \textbf{References} \\
			\hline
			& &
			\\[-1em]
			Phased Q-Learning& $C\frac{|\Scal| |\Acal|}{(1-\gamma)^7\epsilon^2}$ &$(0, (1-\gamma)^{-1}]$ & \cite{kearns1999finite}
			\\
			\hline
			& &
			\\[-1em]
			Empirical QVI& $\frac{|\Scal| |\Acal|}{(1-\gamma)^5\epsilon^2}$ &$(0,1]$& \cite{azar2013minimax}
			\\
			\hline
			&&
			\\[-1em]
			Empirical QVI & $\frac{|\Scal| |\Acal|}{(1-\gamma)^3\epsilon^2}$ & $\Big(0,\frac1{\sqrt{(1-\gamma)|\Scal|}}\Big] $  & 
			{ \cite{azar2013minimax}}
			\\
			\hline
			Randomized Primal-Dual Method& $ C\frac{|\Scal| |\Acal|}{(1-\gamma)^4\epsilon^2} $  &$(0,(1-\gamma)^{-1}]$& \cite{wang2017randomized}\\
			\hline
			Sublinear Randomized
			Value Iteration & $ \frac{|\Scal| |\Acal|}{(1-\gamma)^4 \epsilon^2}\cdot \poly\log\epsilon^{-1}$ &$(0,1]$& \cite{sidford2018variance}
			\\
			\hline
			& &
			\\[-1em]
			Variance Reduced QVI & $ \frac{|\Scal| |\Acal|}{(1-\gamma)^3 \epsilon^2}\cdot \poly\log\epsilon^{-1}$ & $(0,1]$  &\cite{NIPS2018_7765}
			\\[-1em]
			& &
			\\
			\hline
			& &
			\\[-1em]
			Empirical MDP + \emph{any} accurate black-box planner & $ \frac{|\Scal| |\Acal|}{(1-\gamma)^3 \epsilon^2}$ & $(0, {(1-\gamma)^{-1/2}}]$  & This work
			\\[-1em]
			& &
			\\
			\hline
		\end{tabular}
	\end{center}
	\caption{
		\small
		\textbf{Sample Complexity to Compute $\epsilon$-Optimal Policies Using the Generative Sampling Model}: Here $|\Scal|$ is the number of states, $|\Acal|$ is the number of actions per state, $\gamma \in(0,1)$ is the discount factor, and $C$ is an upper bound on the ergodicity. We ignore $\poly\log(|\cS||\cA|/\delta/(1-\gamma))$ factors in the sample complexity. Rewards are bounded between 0 and 1.
		\label{table-sample}}
	\label{tab:literature_runtime_apx}
\end{table*}

\section{Setting}

\paragraph{Markov Decision Process}
We denote a discounted Markov decision process (MDP) as a tuple
$M = (\cS, \cA, P_M, r_M, \gamma)$, where $\cS$ is a finite set of
states, $\cA$ is a finite set of actions,
$P_M:\cS\times \cA\rightarrow\RR^{\cS}$ is the transition kernel (that
is, $P_M(s'~|~s,a)$ is the probability of obtaining state $s'$ when we
take action $a$ in state $s$), $r_M:\cS\times \cA \rightarrow [0,1]$
is the reward function\footnote{We consider the setting where the
  rewards are in $[0,1]$. Our results can be generalized to other
  ranges of reward function via a standard reduction (see
  e.g. \cite{NIPS2018_7765})}, and $\gamma\in(0,1)$ is a discount
factor.  For any $(s,a)$, we denote $P_M(\cdot~|~s,a)\in \RR^{|\cS|}$ as
the probability vector conditioning on state-action pair $(s,a)$.  A
(deterministic) stationary policy is a map $\pi:\cS\rightarrow \cA$ that maps a
state to an action.  The value function of a policy $\pi$ is a vector
$V_M^{\pi}\in \RR^{|\cS|}$, defined as follows.
\begin{align}
\forall s\in \cS:\quad V_M^{\pi}(s):=\EE\Big[\sum_{i=0}^{\infty}\gamma^{t}r_M(s^{t},a^t)|s^0 = s\Big],
\end{align}
where $a^t=\pi(s^t)$ and $s^1, s^2, s^3, \ldots $ are generated from
the distribution $s^{t+1}\sim P_M(\cdot|s^t,a^t)$. We also define an
action value function $Q_M^{\pi}\in \RR^{\cS\times \cA}$ for policy
$\pi$:
\[
\forall (s,a)\in \cS\times \cA:\quad Q_M^{\pi}(s,a) = r_M(s,a) + \gamma P_M(\cdot|s,a)^\top V^{\pi}.
\]
When the MDP $M$ is clear from the context, we drop the subscript to avoid clutter. The goal of a planning algorithm is to find a stationary policy in the MDP which maximizes the expected reward, denoted by $\pi^\star$. The famous theorem of ~\citet{bellman1959functional} shows that there exists a policy $\pi^\star$ which simultaneously maximizes $V^\pi(s_0)$ for all $s_0 \in \cS$. We also use $Q^\star$ and $V^\star$ to denote the value functions induced by $\pi^\star$
We call a policy, $\pi$, $\epsilon$-optimal, if $V^{\pi}(s)\ge V^*(s) -\epsilon$ for all $s\in \cS$.

\paragraph{Generative Model}
Assume we have a access to a \emph{generative model} or a \emph{sampler}, which can provide us with samples $s'\sim P(\cdot~|~s, a)$. Suppose we call our sampler $N$ times at each state action pair. Let $\widehat P$ be our empirical model, defined as follows:
\[
\widehat P(s'|s,a) =  \frac{\textrm{count}(s',s,a)}{N}
\]
where $\textrm{count}(s',s,a)$ is the number of times the state-action pair $(s,a)$ transitions to state $s'$.
We define $\widehat M$ to be the empirical MDP that is identical
to the original $M$, except that it  uses $\widehat P$ instead of
$P$ for the transition kernel. We let $\widehat V^\pi$ and $\widehat Q^\pi$ to denote the value functions of a policy $\pi$ in $\widehat M$, and $\widehat \pi^\star$, $\widehat Q^\star$ and $\widehat V^\star$ refer to the optimal policy and its value functions in $\widehat M$. The reward function $r$ is assumed to be known and deterministic\footnote{If $r$ is unknown, we can use additional $|\cS||\cA|$ samples to obtain the exact value of $r$. If $r$ is stochastic, we can query $|\cS||\cA|/\epsilon^2/(1-\gamma)^{2}$ samples to obtain a sufficiently accurate estimate of its mean. In both cases, the complexity contributed by $r$ is only a lower order term to the present case. We can therefore assume, without loss of generality, $r$ is known and deterministic.}, and hence is identical in $M$ and $\widehat M$.

\paragraph{Optimization Oracle} Our goal in this paper is to determine
the smallest sample size $N$, such that a planner run in $\widehat M$
returns a near-optimal policy in $M$. In order to decouple the
statistical and computational aspects of planning with respect to an
approximate model $\Mhat$, we will make use of an
\emph{optimization oracle} which takes as input an MDP $M$
and returns a policy $\pi$ satisfying:  $\norm{V_{M}^\pi - V_{M}^\star}_\infty \leq
\epsopt$ and $\norm{Q_{M}^\pi - Q_{M}^\star}_\infty \leq
\epsopt$.

We will use this optimization oracle for the empirical
MDP $\Mhat$, and analyze the performance of the returned policy in the
original MDP $M$.  Classical algorithms such as value or policy
iteration~\citep{puterman2014markov} are the most common examples, though
we discuss more sophisticated oracles as well in the next
section.

\section{Main results}
In this section we present our main results. Before presenting our
main theorem, we review some of the key challenges and our
approach. Our high-level approach is to invoke any reasonable
optimization oracle for the sample-based MDP $\Mhat$, and understand
the sub-optimality of the returned policy $\pi$ in the original MDP
$M$. The key challenge is that $\pi$ depends on the randomness in
$\Mhat$, and hence, its value estimate from $\Mhat$ is not an unbiased
estimator of its value in $M$. A usual way to address such issues is
via uniform convergence, that is, we first establish that the values
of all policies are similar in $\Mhat$ and $M$. This then implies that
the high value of $\pi$ in $\Mhat$ translates to a high value in
$M$. Unfortunately, a na\"ive application of this argument yields
bounds scaling as $|\Sset|^2$. \citet{azar2013minimax} do establish
uniform convergence, but use a more careful argument which yields a
bound scaling linearly in $|\Sset|$, but only when the desired
accuracy $\epsilon \leq \sqrt{1/((1-\gamma)|\Sset|)}$, where the
$|\Sset|$ factor in the condition of $\epsilon$ is due to uniform
convergence. \citet{NIPS2018_7765, sidford2018variance} instead use a
more complex algorithmic modification using variance reduction to get
a sharper uniform convergence over a smaller class of policies with
small variance in their value functions. In our result, we instead
rely on a novel technique to directly establish uniform convergence of
our value estimates, while utilizing the most natural algorithmic
scheme of running a black-box optimization oracle on the sample-based
MDP $\widehat M$. We will show the following result for this scheme.

\begin{theorem}
  Suppose $\delta >0$ and  $\epsilon \in (0,(1-\gamma)^{-1/2}]$.
  Let $\pihat$ be any $\epsopt$-optimal policy for  $\wh{M}$,
  i.e.
  \mbox{$\|\wh{Q}^{\pihat} - \wh{Q}^\star\|_{\infty} \le \epsopt $}.
  If
  \[
    N \geq
    \frac{c \gamma \log\big(c \, |\Scal||\Acal|(1-\gamma)^{-1}\delta^{-1}\big)}
    {(1-\gamma)^3\epsilon^2}, 
  \]
  we have
  \[ Q^{ \pihat} \geq  Q^\star - \epsilon -
  \frac{9 \epsopt}{(1-\gamma)}
  \quad\text{and}\quad
  V^{ \pihat} \geq  V^\star - \epsilon -
  \frac{9 \epsopt}{(1-\gamma)},
  \]
  with probability at least $1-\delta$, where $c$ is an absolute constant, provided $\gamma \ge 1/2$.
\label{thm:sample_complexity}
\end{theorem}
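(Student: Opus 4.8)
The plan is to bound the suboptimality of $\pihat$ in the true MDP through the classical three-term decomposition
$Q^\star - Q^{\pihat} = (Q^\star - \widehat Q^\star) + (\widehat Q^\star - \widehat Q^{\pihat}) + (\widehat Q^{\pihat} - Q^{\pihat})$
and to control each piece. The middle term is at most $\epsopt\,\mathbf 1$ by the oracle guarantee $\|\widehat Q^{\pihat} - \widehat Q^\star\|_\infty \le \epsopt$. For the first term I use $Q^\star = Q^{\pi^\star}$ together with $\widehat Q^\star \ge \widehat Q^{\pi^\star}$ to get $Q^\star - \widehat Q^\star \le Q^{\pi^\star} - \widehat Q^{\pi^\star}$, and for the third I replace $\widehat V^{\pihat}$ by $\widehat V^\star$ at cost $O(\epsopt)$, using $\|\widehat V^{\pihat} - \widehat V^\star\|_\infty \le \epsopt$ and $\|(\widehat P - P)(s,a)\|_1 \le 2$. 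With the standard simulation identity $\widehat Q^\pi - Q^\pi = \gamma(I - \gamma P^\pi)^{-1}(\widehat P - P)\widehat V^\pi$ (and its hat-swapped form), this reduces the whole theorem to controlling two inner-product errors, $(P - \widehat P)^\top V^{\pi^\star}$ and $(\widehat P - P)^\top \widehat V^\star$, each propagated through a resolvent $(I - \gamma P^\pi)^{-1}$.

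The first inner product is benign: $V^{\pi^\star}$ is independent of the samples, so each coordinate $(P-\widehat P)(s,a)^\top V^{\pi^\star}$ concentrates directly by Bernstein. The genuinely hard term is $(\widehat P - P)^\top \widehat V^\star$, since $\widehat V^\star$ depends on the very samples $\widehat P(\cdot\,|\,s,a)$ we deviate against, so the summands are not independent and naive Bernstein fails. Here I introduce the absorbing-MDP construction: for each state $s$ and each scalar $u$ on a fine grid $U \subset [0,(1-\gamma)^{-1}]$, let $\widehat M_{s,u}$ be $\widehat M$ with state $s$ made absorbing and its value pinned to $u$ (a self-loop with reward $(1-\gamma)u$). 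For fixed $u$, the optimal value $\widehat V^\star_{s,u}$ of $\widehat M_{s,u}$ is a deterministic function of the samples at states other than $s$, hence independent of $\widehat P(\cdot\,|\,s,\cdot)$; so Bernstein applies to $(\widehat P - P)(s,a)^\top \widehat V^\star_{s,u}$ for each fixed $(s,a,u)$, and I union bound over $(s,a)$ and over $U$. The bridge back is a self-consistency observation: for $u^\star = \widehat V^\star(s)$ one has $\widehat V^\star_{s,u^\star} = \widehat V^\star$ exactly (absorbing a state at its own optimal value leaves every other optimal value unchanged), while $\|\widehat V^\star_{s,u} - \widehat V^\star_{s,u'}\|_\infty \le |u - u'|$ controls the grid error.

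To obtain the optimal $(1-\gamma)^{-3}$ dependence I use the variance form of Bernstein, whose leading contribution is $\gamma(I-\gamma P^\pi)^{-1}\sqrt{\mathrm{Var}_P(\widehat V^\star)/N}$. I bound this via a law-of-total-variance argument: for any $V$ with $\|V\|_\infty \le (1-\gamma)^{-1}$, Cauchy--Schwarz against $(I-\gamma P^\pi)^{-1}\mathbf 1 = (1-\gamma)^{-1}\mathbf 1$ together with the total-discounted-variance bound $O((1-\gamma)^{-2})$ gives $\|(I-\gamma P^\pi)^{-1}\sqrt{\mathrm{Var}_P(V)}\|_\infty = O((1-\gamma)^{-3/2})$. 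Since the variance is of $\widehat V^\star$ rather than $V^\star$, I use the seminorm inequality $\sqrt{\mathrm{Var}_P(\widehat V^\star)} \le \sqrt{\mathrm{Var}_P(V^\star)} + \|\widehat V^\star - V^\star\|_\infty\,\mathbf 1$, which makes the quantity $\Delta := \|\widehat V^\star - V^\star\|_\infty$ appear on both sides, schematically $\Delta \lesssim \gamma\sqrt{L/N}\,[(1-\gamma)^{-3/2} + (1-\gamma)^{-1}\Delta] + O(\epsopt/(1-\gamma))$ with $L = \log(|\Scal||\Acal|/((1-\gamma)\delta))$. Solving this self-bounding inequality yields $\Delta = O(\epsilon)$ once $N \gtrsim L/((1-\gamma)^3\epsilon^2)$, and the requirement that the self-term $\gamma\sqrt{L/N}\,(1-\gamma)^{-1}\Delta$ be absorbable forces $N \gtrsim L/(1-\gamma)^2$, i.e.\ precisely the stated range $\epsilon \le (1-\gamma)^{-1/2}$.

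I expect the main obstacle to be the dependency decoupling in the second inner product: making rigorous that $\widehat V^\star_{s,u}$ is independent of $\widehat P(\cdot\,|\,s,\cdot)$ while simultaneously matching $\widehat V^\star$ at the data-dependent level $u^\star$, and controlling the grid and variance-mismatch errors without degrading the $(1-\gamma)^{-3}$ rate. The absorbing construction resolves the independence cleanly, but the bookkeeping of the cross-terms (empirical versus true variance, the resolvent under $P^\pi$ versus $\widehat P^\pi$, and the propagation of $\epsopt$ through $(I-\gamma P^\pi)^{-1}$ that produces the $9\epsopt/(1-\gamma)$ slack) is the delicate part that the self-bounding step must absorb.
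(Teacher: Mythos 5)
Your skeleton matches the paper's almost exactly: the same three-term decomposition, the same simulation identity, and the same absorbing-MDP-plus-grid device to decouple $\wh{P}_{s,a}$ from the value function it is integrated against (your uniform grid over $[0,(1-\gamma)^{-1}]$ works as well as the paper's grid centered at $V^\star(s)$, at the price of a $\log N$ inside the union bound). The genuine gap is in the variance-propagation step. Your claim that for \emph{any} $V$ with $\|V\|_\infty \le (1-\gamma)^{-1}$ one has $\|(I-\gamma P^\pi)^{-1}\sqrt{\var_P(V)}\|_\infty = O((1-\gamma)^{-3/2})$ is false. The total-discounted-variance bound $\|\Sigma^\pi_M\|_\infty\le\gamma^2/(1-\gamma)^2$ behind it (Equation~\ref{eq:bellman_var}) holds only for the \emph{matched} triple: resolvent of policy $\pi$, variance under the same transition model $P$, evaluated at the value function $V^\pi_M$ of that very policy in that very model. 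Counterexample: two states, every $P(\cdot\,|\,s,a)$ uniform over them, $V(s_1)=0$, $V(s_2)=(1-\gamma)^{-1}$; then $\sqrt{\var_P(V)} = \tfrac{1}{2}(1-\gamma)^{-1}\mathds{1}$ and $\|(I-\gamma P^\pi)^{-1}\sqrt{\var_P(V)}\|_\infty = \tfrac{1}{2}(1-\gamma)^{-2}$. Such a $V$ is simply not the value function of any policy in that MDP, which is exactly what the lemma exploits.

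This matters because after your reduction $\sqrt{\var_P(\wh{V}^\star)} \le \sqrt{\var_P(V^\star)} + \Delta\,\mathds{1}$, the surviving terms are mismatched: for the $\pihat$-term the resolvent is $(I-\gamma P^{\pihat})^{-1}$ while $V^\star = V^{\pi^\star}$ is the value of a \emph{different} policy, and for the $\pi^\star$-term your hat-swapped identity produces the resolvent $(I-\gamma \wh{P}^{\pi^\star})^{-1}$ while the variance is under $P$, a model mismatch. Neither pair can be fed to the total-variance lemma. The paper's resolution is to pivot the variance not to $V^\star$ but to the value, in the \emph{true} MDP, of the \emph{same} policy as the resolvent: $\sqrt{\var_P(\wh{V}^\star)} \le \sqrt{\var_P(V^{\pihat})} + \sqrt{\var_P(V^{\pihat}-\wh{V}^{\pihat})} + \sqrt{\var_P(\wh{V}^{\pihat}-\wh{V}^\star)}$, so the leading term is matched, the middle term is at most $\|Q^{\pihat}-\wh{Q}^{\pihat}\|_\infty\mathds{1}$ (this, not $\|\wh{V}^\star - V^\star\|_\infty$, is the quantity the self-bounding inequality is solved for), and the last is at most $\epsopt\mathds{1}$; the $\pi^\star$ claim is handled analogously with the true resolvent $(I-\gamma P^{\pi^\star})^{-1}$ and pivot $V^{\pi^\star}$, which is why the paper's Lemma~\ref{lemma:cover4} covers $\wh{V}^{\pi^\star}$ by the absorbing construction as well rather than exploiting the determinism of $V^\star$. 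Your route can likely be patched --- either by a further pivot $\sqrt{\var_P(V^\star)} \le \sqrt{\var_P(V^{\pihat})} + \|V^\star - V^{\pihat}\|_\infty\mathds{1}$, which introduces a second self-bounded variable, or by relating $\var_P$ to $\var_{\wh{P}}$ through extra concentration in the style of Azar et al.\ (2013) --- but as written, the inequality your self-bounding step rests on is not valid, and repairing it is precisely the crux of the analysis.
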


Thus, the theorem shows that if $\epsopt$ is made suitably small
(roughly $(1-\gamma)\epsilon$), then we will find an $O(\epsilon)$
sub-optimal policy with
$O\left(\log\frac{|\Scal||\Acal|}{(1-\gamma)\delta}/(1-\gamma)^3/\epsilon^2\right)$
samples in each $s,a$ pair. The total number of samples from the
generative model then is $|\Scal||\Acal| N$ which amounts to
$O\left(|\Scal||\Acal|\log\frac{|\Scal||\Acal|}{(1-\gamma)\delta}/(1-\gamma)^3/\epsilon^2\right)$
samples. As remarked before, this is known to be unimprovable (up to a logarithmic factor) in the
regime $\epsilon\in(0,1]$ due to the lower bounds
of~\citep{azar2012sample, NIPS2018_7765}.

We have so far focused on the statistical aspects of our estimators, since the use of a black-box optimization method in $\Mhat$ allows us to leverage the best possible solutions available. We now discuss some specific implications on the computational complexity of sparse model-based planning, instantiating the bound for some of the natural methods that may be used. Throughout we focus on attaining $\epsopt = O((1-\gamma)\epsilon)$, since that equates the statistical and optimization errors. A very natural idea is to use value iteration (see e.g. \cite{puterman2014markov}), which requires $O[(1-\gamma)^{-1}\cdot \log \epsopt^{-1}]$ iterations, with each iteration taking $O(|\cS||\cA|N)$ time. Thus the overall running time for this algorithm is \[
O\big(|\cS||\cA|N\cdot (1-\gamma)^{-1}\cdot \log \epsopt^{-1}\big)
 = O\bigg(\frac{|\cS||\cA|\cdot \log\frac{|\cS||\cA|}{(1-\gamma)\delta}\cdot \log \frac{1}{(1-\gamma)\epsilon}}{(1-\gamma)^4\epsilon^2}\bigg).
\]
Policy iteration methods (see again \cite{puterman2014markov}) can obtain an $\epsopt$-optimal policy within the same iteration complexity bound as value iteration.
However, each iteration of the policy iteration requires solving a linear system of size $|\cS|^2$, which can be expensive.
This computation time can be additionally improved.
For instance, after initial phase of reading $O(|\cS||\cA|N)$ data points, \cite{sidford2018variance} give a randomized algorithm to obtain an $\epsopt$-optimal policy with probability at least $1-\delta$  in time
\[
\wt{O}\bigg[\bigg(\mathrm{nnz}(\wh{P}) + \frac{|\cS||\cA|}{(1-\gamma)^3}\bigg)\cdot \log\Big(\frac{1}{\epsopt}\Big)\cdot \log\frac{1}{\delta}\bigg]
=\wt{O}\bigg[\frac{|\cS||\cA|}{(1-\gamma)^3}\cdot\bigg(\frac{ \log\frac{|\cS||\cA|}{(1-\gamma)\delta}}{\min(\epsilon^2, 1)}\bigg)\cdot \log\Big(\frac{1}{(1-\gamma)\epsilon}\Big)\cdot \log\frac{1}{\delta}\bigg],
\]
where $\wt{O}$ hides $\mathrm{poly}\log\log$ factors and $\mathrm{nnz}(P)$ means the number of non-zero entries in $P$. Thus, the computational complexity of this scheme is nearly-linear in the total sample size up to additional logarithmic factors.
There are other results for obtaining an exactly optimal policy for the MDP $\wh{M}$ as well, for instance the SIMPLEX policy iteration \cite{ye2011simplex}, which runs in time $O(\mathrm{poly}(|\cS||\cA|N/(1-\gamma))$.

\section{Analysis}

We begin with some notation needed for our analysis, and then give a high-level outline of the proof, along with some basic lemmas. We then present our main technical novelty, which is a construction of an auxiliary MDP as a device to guarantee uniform convergence of value functions. We conclude by providing the proof of the theorem in terms of the key lemmas, deferring the proofs of the lemmas to the appendix.

\paragraph{Additional Notation}
For a vector $v$, we let $(v)^2$, $\sqrt{v}$, and $|v|$ be the
component-wise square, square root, and absolute value operations.
We let $\mathds{1}$ denotes the vector of all ones (adapting to dimensions based on the context).
It is helpful to overload notation and let $P$ be a matrix of
	size $(\Scal\times \Acal)\times \Scal$  where the entry
$P_{(s,a),s'}$ is equal to $P(s'~|~s,a)$. Also, let $P_{s,a}$ denote the vector $P(\cdot~|~s,a)$.
	We also define $P^\pi$ to be the transition matrix on
	state-action pairs induced by a deterministic policy $\pi$. In
	particular,
	\begin{align*}
		P^\pi_{(s,a),(s',a')} = P(s'~|~s,a)\quad\text{if}\quad a'=\pi(s'),\quad \text{and}\quad
		P^\pi_{(s,a),(s',a')} = 0\quad\text{if}\quad a'\neq\pi(s').
	\end{align*}
	 With this notation, we have
	\begin{align*}
	Q^\pi  = r+\gamma PV^\pi= r+\gamma P^\pi Q^\pi, \quad\mbox{and}\quad
	Q^\pi = (I-\gamma P^\pi)^{-1} r.
	\end{align*}
	Slightly abusing the notation, for $V\in \R^{\Scal}$, we define the vector $\mathrm{Var}_{P}(V)\in \R^{\Scal\times\Acal}$ as:
	\[
	\mathrm{Var}_{P}(V)(s,a) := \mathrm{Var}_{P(\cdot|s,a)}(V), \quad \mbox{so that}\quad \mathrm{Var}_{P}(V) = P (V)^2 - (P V)^2,
	\]
    where the squares are applied componentwise.
	We also define  $\Sigma^\pi_M$ as the variance of the discounted reward, i.e.
	\[
	\Sigma^\pi_M(s,a) := \E\bigg[\bigg(\sum_{t=0}^\infty \gamma^t r(s_t,a_t)-Q^\pi_M(s,a)\bigg)^2~
	\bigg|~s_0=s,a_0=a\bigg]
\]
where the expectation is induced under the trajectories induced by
$\pi$ in $M$. It can be verified that, for all $\pi$, $\Sigma^\pi$ satisfies
the following Bellman style, self-consistency conditions (see Lemma 6 in
\cite{azar2013minimax}):
	\begin{equation}\label{eq:bellman_var}
	\Sigma^\pi_M = \gamma^2 \mathrm{Var}_{P}(V^{\pi}_M)+\gamma^2 P^\pi \Sigma^\pi_M
	\end{equation}
	It is straightforward to verify that $\|\Sigma^\pi_M\|_\infty\leq
	\gamma^2/(1-\gamma)^2$.


\subsection{Errors in empirical estimates}

We begin the analysis by stating some basic results about empirical
estimates of values derived from $\Mhat$ relative to their true values
in $M$. We start with stating a lemma on componentwise error bounds. Its proof has been postponed to the appendix.
\begin{lemma}[Componentwise bounds]\label{lemma:bounds}
	For any policy $\pi$, we have
	\[
	Q^\pi - \widehat Q^\pi =  \gamma (I-\gamma P^\pi)^{-1}( P- \widehat P) \widehat V^\pi.
	\]
In addition, we have:
  \[
  Q^{\pi}\geq  Q^\star
-\| Q^{\pi} - \widehat Q^\pi\|_\infty - \| \widehat Q^\pi - \widehat Q^\star \|_\infty - \|\widehat Q^{\pi^\star}- Q^\star\|_\infty
    \]
and
  \[
V^{\pi}\geq  V^\star
-\| Q^{\pi} - \widehat Q^\pi\|_\infty - \| \widehat V^\pi - \widehat V^\star \|_\infty - \|\widehat Q^{\pi^\star}- Q^\star\|_\infty
\]
\end{lemma}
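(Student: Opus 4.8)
The plan is to establish the three claims in sequence, since the two lower bounds reuse the first identity only in spirit and are otherwise elementary triangle-inequality arguments.

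For the matrix identity I would begin from the two Bellman equations $Q^\pi = r + \gamma P V^\pi$ and $\widehat Q^\pi = r + \gamma \widehat P \widehat V^\pi$ and subtract them so that the (common) reward cancels, leaving $Q^\pi - \widehat Q^\pi = \gamma P V^\pi - \gamma \widehat P \widehat V^\pi$. The key manipulation is to add and subtract $\gamma P \widehat V^\pi$, which splits this into $\gamma P(V^\pi - \widehat V^\pi) + \gamma (P - \widehat P)\widehat V^\pi$. I would then use the observation that, because the \emph{same} fixed policy $\pi$ indexes both value functions, $(V^\pi - \widehat V^\pi)(s') = (Q^\pi - \widehat Q^\pi)(s',\pi(s'))$, so that $P(V^\pi - \widehat V^\pi) = P^\pi(Q^\pi - \widehat Q^\pi)$ where $P^\pi$ is the policy-induced transition matrix on state-action pairs. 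Substituting and collecting the $Q^\pi - \widehat Q^\pi$ terms gives $(I - \gamma P^\pi)(Q^\pi - \widehat Q^\pi) = \gamma(P - \widehat P)\widehat V^\pi$; inverting $(I - \gamma P^\pi)$, which is nonsingular since $\gamma < 1$ and $P^\pi$ is row-stochastic, yields the claimed identity.

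For the two lower bounds I would run a common chain of componentwise triangle inequalities. For the $Q$-bound: $Q^\pi \geq \widehat Q^\pi - \|Q^\pi - \widehat Q^\pi\|_\infty \mathbf{1} \geq \widehat Q^\star - \|\widehat Q^\pi - \widehat Q^\star\|_\infty \mathbf{1} - \|Q^\pi - \widehat Q^\pi\|_\infty \mathbf{1}$, then invoke optimality in the \emph{empirical} MDP, $\widehat Q^\star \geq \widehat Q^{\pi^\star}$ (the empirical-optimal value dominates that of any policy, in particular $\pi^\star$), and finally $\widehat Q^{\pi^\star} \geq Q^\star - \|\widehat Q^{\pi^\star} - Q^\star\|_\infty \mathbf{1}$; concatenating gives exactly the stated inequality. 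The $V$-bound proceeds identically with $V$ in place of $Q$, needing only two extra observations: that $\|V^\pi - \widehat V^\pi\|_\infty \leq \|Q^\pi - \widehat Q^\pi\|_\infty$, since $V^\pi - \widehat V^\pi$ is the restriction of $Q^\pi - \widehat Q^\pi$ to the actions chosen by $\pi$ (this is what lets the $Q$-difference norm appear in the $V$-bound), and that $\widehat V^{\pi^\star}(s) - V^\star(s) = \widehat Q^{\pi^\star}(s,\pi^\star(s)) - Q^\star(s,\pi^\star(s))$ because $V^\star(s) = Q^\star(s,\pi^\star(s))$, so the final term is again controlled by $\|\widehat Q^{\pi^\star} - Q^\star\|_\infty$.

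None of the steps is genuinely difficult; the lemma is essentially careful bookkeeping. The one place that demands attention is the reduction $P(V^\pi - \widehat V^\pi) = P^\pi(Q^\pi - \widehat Q^\pi)$ in the first part, which hinges on correctly tracking the two matrix conventions (the $(\Scal\times\Acal)\times\Scal$ matrix $P$ versus the induced $P^\pi$ on state-action pairs) together with the fact that a single policy governs both value functions. In the lower bounds the only subtlety is orienting each inequality correctly — in particular appealing to optimality in $\widehat M$ (that is, $\widehat Q^\star \geq \widehat Q^{\pi^\star}$ and $\widehat V^\star \geq \widehat V^{\pi^\star}$) rather than in $M$, and charging the $V^\pi$ discrepancy against the $Q^\pi$-difference norm.
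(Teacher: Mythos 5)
Your proposal is correct and follows essentially the same route as the paper: the identity is the same fixed-point algebra (the paper applies the resolvent difference formula $(I-\gamma P^\pi)^{-1}r - (I-\gamma\widehat P^\pi)^{-1}r$ directly, while you re-derive it by subtracting Bellman equations and collecting the $Q^\pi - \widehat Q^\pi$ terms, which is the same computation), and the two lower bounds use the identical chain of triangle inequalities anchored by optimality in the empirical MDP, $\widehat Q^\star \geq \widehat Q^{\pi^\star}$, plus the reduction $\|V^\pi - \widehat V^\pi\|_\infty \leq \|Q^\pi - \widehat Q^\pi\|_\infty$.
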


\begin{proof}
	For any policy $\pi$,
	\begin{eqnarray*}
		Q^\pi - \widehat Q^\pi &= & (I-\gamma P^\pi)^{-1} r -
		(I-\gamma\widehat P^\pi)^{-1} r\\
		&= & (I-\gamma P^\pi)^{-1}((I-\gamma \widehat P^\pi)
		-(I-\gamma P^\pi)) \widehat Q^\pi\\
		&= & \gamma (I-\gamma P^\pi)^{-1}( P^\pi- \widehat P^\pi)\widehat Q^\pi\\
		&= & \gamma (I-\gamma P^\pi)^{-1}( P- \widehat P) \widehat V^\pi\, .
	\end{eqnarray*}
	For the second claim,
	\begin{align*}
	Q^{\pi} - Q^\star &= Q^{\pi} - \widehat Q^\star+ \widehat Q^\star- Q^\star\\
	&\geq
	Q^{\pi} - \widehat Q^\star+ \widehat Q^{\pi^\star}- Q^\star
	\\
	&\geq
	-\| Q^{\pi} - \widehat Q^\star\|_\infty- \|\widehat Q^{\pi^\star}- Q^\star\|_\infty\\
	&\geq -\| Q^{\pi} - \widehat Q^\pi\|_\infty - \| \widehat Q^\pi - \widehat Q^\star \|_\infty - \|\widehat Q^{\pi^\star}- Q^\star\|_\infty.
	\end{align*}
	Similarly, we have
	 \[
	V^{\pi}\geq  V^\star
	-\| V^{\pi} - \widehat V^\pi\|_\infty - \| \widehat V^\pi - \widehat V^\star \|_\infty - \|\widehat V^{\pi^\star}- V^\star\|_\infty.
	\]
	Another application of 
	\[\forall \pi,\quad \|V^\pi - \widehat{V}^{\pi}\|_{\infty} \le \|Q^\pi - \widehat{Q}^{\pi}\|_{\infty}\]  completes the proof of the first inequality.
\end{proof}

We hope to invoke the second part of the lemma to establish
Theorem~\ref{thm:sample_complexity}, where the middle term is
the optimization error, and we will focus on bounding the other two
terms. We next state another basic lemma.

\begin{lemma}
For any policy $\pi$, MDP $M$ and vector $v \in \R^{|\Sset|\times|\Aset|}$, we have $\norm{(I-\gamma P^\pi)^{-1} v}_\infty \leq \norm{v}_\infty/(1-\gamma)$.
\label{lemma:horizon}
\end{lemma}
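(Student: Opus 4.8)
The plan is to exploit the fact that $P^\pi$ is a row-stochastic matrix, so that multiplication by $P^\pi$ is a non-expansion in the $\ell_\infty$ norm, and then to expand the inverse as a Neumann series.

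First I would verify that $P^\pi$ is row-stochastic. Fix a row indexed by $(s,a)$. By definition its entries are $P^\pi_{(s,a),(s',a')} = P(s'\mid s,a)$ when $a' = \pi(s')$ and $0$ otherwise, so they are nonnegative and sum to $\sum_{s'} P(s'\mid s,a) = 1$, where I have used that a deterministic $\pi$ selects exactly one action per state so the sum over $(s',a')$ collapses to a sum over $s'$. Consequently, for any vector $w$, every coordinate of $P^\pi w$ is a convex combination of the coordinates of $w$, which yields the non-expansion bound $\norm{P^\pi w}_\infty \le \norm{w}_\infty$.

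Next, since $\gamma \in (0,1)$ and $P^\pi$ is a non-expansion in $\ell_\infty$, the matrix $I - \gamma P^\pi$ is invertible with the absolutely convergent Neumann expansion $(I-\gamma P^\pi)^{-1} = \sum_{t=0}^\infty \gamma^t (P^\pi)^t$. Applying this to $v$, using the triangle inequality, and iterating the non-expansion property to get $\norm{(P^\pi)^t v}_\infty \le \norm{v}_\infty$ for every $t$, I obtain
\[
\norm{(I-\gamma P^\pi)^{-1} v}_\infty \le \sum_{t=0}^\infty \gamma^t \norm{(P^\pi)^t v}_\infty \le \sum_{t=0}^\infty \gamma^t \norm{v}_\infty = \frac{\norm{v}_\infty}{1-\gamma}.
\]

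This is a standard contraction argument, and I do not anticipate any genuine obstacle; the only points deserving a moment of care are confirming that the rows of $P^\pi$ sum to one (which relies on summing over both coordinates $(s',a')$ and using that $\pi$ picks a single action per state) and justifying convergence of the Neumann series, both of which are immediate from $\gamma < 1$ together with the $\ell_\infty$ non-expansion property.
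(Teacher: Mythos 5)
Your proof is correct, but it takes a genuinely different route from the paper's. You expand the inverse as the Neumann series $(I-\gamma P^\pi)^{-1} = \sum_{t\ge 0}\gamma^t (P^\pi)^t$ and bound term by term, which obliges you to first justify invertibility and absolute convergence of the series (which you do, via the operator-norm bound $\norm{\gamma P^\pi}_\infty \le \gamma < 1$). The paper instead argues in reverse: setting $w = (I-\gamma P^\pi)^{-1}v$, it writes
\[
\norm{v}_\infty = \norm{(I-\gamma P^\pi)w}_\infty \ \ge\ \norm{w}_\infty - \gamma\norm{P^\pi w}_\infty \ \ge\ (1-\gamma)\norm{w}_\infty,
\]
and rearranges. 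Both arguments pivot on exactly the same key fact---$P^\pi$ is row-stochastic, hence multiplication by it is an $\ell_\infty$ non-expansion---but the paper's version sidesteps any discussion of series convergence, since the existence of the inverse is already presupposed by the statement of the lemma, while your version is more explicit and constructive; indeed the same Neumann expansion is the tool the paper itself reaches for elsewhere (e.g., in the proof of Lemma~\ref{lemma:crude}). Either argument is complete and acceptable.
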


\begin{proof}
	Note that $v=(I - \gamma P^\pi)(I - \gamma P^\pi)^{-1}  v = (I - \gamma P^\pi) w$, where
	$w = (I - \gamma P^\pi)^{-1}  v$.
	By triangle inequality, we have
	\begin{align*}
	\|v\|= \norm{(I - \gamma P^\pi) w} &\geq \norm{w}_\infty - \gamma \norm{P^\pi w}_\infty \geq \norm{w}_\infty - \gamma \norm{w}_\infty,
	\end{align*}
	where the final inequality follows since $P^\pi w$ is an average of the elements of $w$ by the definition of $P^\pi$ so that $\norm{P^\pi w}_\infty \leq \norm{w}_\infty$. Rearranging terms completes the proof.
\end{proof}

Our next lemma is a key observation in Lemma 6
of~\citet{azar2012sample}, namely the Bellman property of a policy's
variance and its accumulation under the transition operator of the
corresponding policy. We provide a short proof in the appendix for completeness.

\begin{lemma}\label{lemma:var_add}
For any policy $\pi$ and MDP $M$,
\[
\Big\| (I-\gamma P^\pi)^{-1}\sqrt{\mathrm{Var}_{P}(V^{\pi}_M)}\Big\|_\infty
\leq \sqrt{\frac{2}{(1-\gamma)^3} },
\]
where $P$ is the transition model of $M$.
\end{lemma}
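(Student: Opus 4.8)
The plan is to exploit two structural facts. First, the resolvent $(I-\gamma P^\pi)^{-1}=\sum_{t\ge 0}\gamma^t (P^\pi)^t$ is entrywise nonnegative, and since each $(P^\pi)^t$ is row-stochastic, every row of $(I-\gamma P^\pi)^{-1}$ sums to $\sum_{t\ge0}\gamma^t = 1/(1-\gamma)$. Second, the one-step variance $\mathrm{Var}_{P}(V^\pi_M)$ accumulates along trajectories into the total discounted-reward variance $\Sigma^\pi_M$ through the self-consistency identity~\eqref{eq:bellman_var}, for which we already have $\|\Sigma^\pi_M\|_\infty \le \gamma^2/(1-\gamma)^2$. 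The overall strategy is to first linearize the square root so that this accumulation bound can be brought to bear.

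First I would apply Jensen's inequality to pull the square root outside the resolvent. Write $\sigma := \mathrm{Var}_{P}(V^\pi_M)\ge 0$ and fix a pair $(s,a)$. The $(s,a)$-row of $(1-\gamma)(I-\gamma P^\pi)^{-1}$ is a probability distribution $\mu_{s,a}$ over state-action pairs, so that $[(I-\gamma P^\pi)^{-1}\sqrt{\sigma}](s,a)=\tfrac{1}{1-\gamma}\,\E_{\mu_{s,a}}[\sqrt{\sigma}]\le \tfrac{1}{1-\gamma}\sqrt{\E_{\mu_{s,a}}[\sigma]}=\tfrac{1}{\sqrt{1-\gamma}}\sqrt{[(I-\gamma P^\pi)^{-1}\sigma](s,a)}$ by concavity of $\sqrt{\cdot}$. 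Maximizing over $(s,a)$ gives the reduction $\|(I-\gamma P^\pi)^{-1}\sqrt{\sigma}\|_\infty \le (1-\gamma)^{-1/2}\sqrt{\|(I-\gamma P^\pi)^{-1}\sigma\|_\infty}$, so it remains to establish the linear bound $\|(I-\gamma P^\pi)^{-1}\mathrm{Var}_{P}(V^\pi_M)\|_\infty \le 2/(1-\gamma)^2$.

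For that linear bound I would solve~\eqref{eq:bellman_var} for the one-step variance, $\gamma^2 \mathrm{Var}_{P}(V^\pi_M) = (I-\gamma^2 P^\pi)\Sigma^\pi_M$, and use the splitting $I-\gamma^2 P^\pi = (I-\gamma P^\pi)+\gamma(1-\gamma)P^\pi$. Multiplying by $(I-\gamma P^\pi)^{-1}$ then telescopes the first term to the identity, yielding $\gamma^2(I-\gamma P^\pi)^{-1}\mathrm{Var}_{P}(V^\pi_M) = \Sigma^\pi_M + \gamma(1-\gamma)(I-\gamma P^\pi)^{-1}P^\pi\Sigma^\pi_M$. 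Taking $\ell_\infty$ norms, bounding the second term via Lemma~\ref{lemma:horizon} together with $\|P^\pi\Sigma^\pi_M\|_\infty\le\|\Sigma^\pi_M\|_\infty$ (as $P^\pi$ is row-stochastic), and inserting $\|\Sigma^\pi_M\|_\infty\le \gamma^2/(1-\gamma)^2$ gives $\|(I-\gamma P^\pi)^{-1}\mathrm{Var}_{P}(V^\pi_M)\|_\infty\le (1+\gamma)/(1-\gamma)^2\le 2/(1-\gamma)^2$. Combining with the Jensen reduction produces the claimed bound $\sqrt{2/(1-\gamma)^3}$.

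I expect the two-step interplay to be the conceptual crux rather than any single calculation. Applying Lemma~\ref{lemma:horizon} directly to $\sqrt{\sigma}$ would only give $\|\sqrt{\sigma}\|_\infty/(1-\gamma)\le (1-\gamma)^{-2}$, so the sharper $(1-\gamma)^{-3/2}$ scaling hinges on first using Jensen to convert the quantity into a \emph{linear} functional of $\sigma$, and only then invoking the Bellman variance self-consistency~\eqref{eq:bellman_var} to control $\|(I-\gamma P^\pi)^{-1}\sigma\|_\infty$ by the total-variance bound instead of the crude $(1-\gamma)^{-3}$ one would get from Lemma~\ref{lemma:horizon} alone. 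The remaining subtlety is purely algebraic: the decomposition $I-\gamma^2 P^\pi=(I-\gamma P^\pi)+\gamma(1-\gamma)P^\pi$ is what lets the $\gamma^2$-discounted variance recursion be re-expressed through the $\gamma$-discounted resolvent.
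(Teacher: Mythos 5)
Your proof is correct and follows essentially the same route as the paper's: Jensen's inequality applied to the row-stochastic matrix $(1-\gamma)(I-\gamma P^\pi)^{-1}$ to linearize the square root, then the Bellman variance identity~\eqref{eq:bellman_var} together with $\|\Sigma^\pi_M\|_\infty \le \gamma^2/(1-\gamma)^2$ to bound the resulting linear functional. The only difference is organizational: the paper first proves the standalone comparison $\|(I-\gamma P^\pi)^{-1}v\|_\infty \le 2\|(I-\gamma^2 P^\pi)^{-1}v\|_\infty$ via the decomposition $I-\gamma^2 P^\pi = (1-\gamma)I + \gamma(I-\gamma P^\pi)$ and then substitutes $v = \mathrm{Var}_P(V^\pi_M) = \gamma^{-2}(I-\gamma^2 P^\pi)\Sigma^\pi_M$, whereas you substitute the Bellman identity first and split $I-\gamma^2 P^\pi = (I-\gamma P^\pi)+\gamma(1-\gamma)P^\pi$ afterwards---the same algebra in a different order, yielding the same constant.
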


\begin{proof}
	Note that $(1-\gamma)(I-\gamma P^\pi)^{-1}$ is matrix whose rows
	are a probability distribution. For a positive vector $v$ and
	a distribution $\nu$ (where $\nu$ is vector of the same dimension of
	$v$),  Jensen's inequality implies that $\nu \cdot\sqrt{ v}\leq
	\sqrt{\nu \cdot v}$. This implies:
	\begin{eqnarray*}
		\| (I-\gamma P^\pi)^{-1}\sqrt{v}\|_\infty &=&
		\frac{1 }{1-\gamma}\|(1-\gamma) (I-\gamma P^\pi)^{-1}\sqrt{v}\|_\infty\\
		&\leq& \sqrt{\Big\|\frac{1}{1-\gamma} (I-\gamma P^\pi)^{-1} v\Big\|_\infty}\\
		&\leq& \sqrt{\Big\|\frac{2 }{1-\gamma} (I-\gamma^2 P^\pi)^{-1}
			v\Big\|_\infty}\, .
	\end{eqnarray*}
	where we have used that $\| (I-\gamma P^\pi)^{-1} v\|_\infty\leq 2\|
	(I-\gamma^2 P^\pi)^{-1} v\|_\infty$ (which we will prove shortly).
	The
	proof is completed as follows: by Equation~\ref{eq:bellman_var}, $\Sigma_M^\pi= \gamma^2(I-\gamma^2
	P^\pi)^{-1}\mathrm{Var}_{P}(V^{\pi}_M) $, so taking
	$v=\mathrm{Var}_{P}(V^{\pi}_M)$ and using that
	$\|\Sigma^\pi_M\|_\infty\leq \gamma^2/(1-\gamma)^2$ completes the proof.
	
	Finally, to see that $\| (I-\gamma P^\pi)^{-1} v\|_\infty\leq 2\|
	(I-\gamma^2 P^\pi)^{-1} v\|_\infty$,
	observe:
	\begin{eqnarray*}
		\| (I-\gamma P^\pi)^{-1} v\|_\infty &=&
		\| (I-\gamma P^\pi)^{-1}(I-\gamma^2P^\pi) (I-\gamma^2P^\pi)^{-1} v\|_\infty \\
		&=& \| (I-\gamma P^\pi)^{-1}\Big((1-\gamma)I +\gamma(I-\gamma P^\pi)\Big) (I-\gamma^2P^\pi)^{-1} v\|_\infty \\
		&=&\| \Big( (1-\gamma) (I-\gamma P^\pi)^{-1} +\gamma I \Big) (I-\gamma^2P^\pi)^{-1} v\|_\infty \\
		&\leq&(1-\gamma) \|(I-\gamma P^\pi)^{-1}(I-\gamma^2P^\pi)^{-1} v\|_\infty +\gamma \|(I-\gamma^2P^\pi)^{-1} v\|_\infty \\
		&\leq& \frac{1-\gamma }{1-\gamma}\|(I-\gamma^2P^\pi)^{-1} v\|_\infty+\gamma\|(I-\gamma^2P^\pi)^{-1} v\|_\infty \\
		&\leq& 2 \|(I-\gamma^2P^\pi)^{-1} v\|_\infty
	\end{eqnarray*}
	which proves the claim.
\end{proof}

Finally, it will be useful to also have more direct bounds on the errors in our value estimates which follow directly from Hoeffding's inequality, even though we are eventually after more careful bounds that account for variance. This result can be also be found as Lemma 4 in~\citet{azar2013minimax}, and is a standard concentration argument.
For completeness, we provide its proof here. 
\begin{lemma}[Crude Value Bounds, Lemma 4 in~\citet{azar2013minimax}]\label{lemma:crude}
  Let $\delta\geq 0$. With probability greater than $1-\delta$,
  \begin{eqnarray*}
    \|Q^\star-\widehat Q^{\pi^\star}\|_\infty \leq\Delta_{\delta,N} \quad\mbox{and}\quad
    \|Q^\star-\widehat Q^\star\|_\infty \leq
\Delta_{\delta,N}\,,~~\mbox{where}~~\Delta_{\delta,N} := \frac{\gamma}{(1-\gamma)^2} \sqrt{\frac{2\log(2|\Scal||\Acal|/\delta)}{N}}.
  \end{eqnarray*}
\end{lemma}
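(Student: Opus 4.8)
The plan is to reduce both error terms to a single, data-independent concentration event and then ``undo'' the discounting with the resolvent bounds already in hand. The central observation is that although $\widehat Q^{\pi^\star}$ and $\widehat Q^\star$ both depend on the samples, the only randomness that matters enters through $(P-\widehat P)$ acting on the \emph{fixed} vector $V^\star$ (equivalently $V^{\pi^\star}$), which is deterministic and independent of the draws. So I would first record the basic concentration bound: for each fixed $(s,a)$, the coordinate $\big((P-\widehat P)V^\star\big)(s,a) = \widehat P_{s,a}\cdot V^\star - P_{s,a}\cdot V^\star$ is the deviation of an empirical average of $N$ i.i.d.\ copies of $V^\star(s')$, $s'\sim P(\cdot\mid s,a)$, from its mean. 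Since $V^\star\in[0,(1-\gamma)^{-1}]$, Hoeffding's inequality together with a union bound over the $|\Scal||\Acal|$ pairs yields, with probability at least $1-\delta$,
\[
\|(P-\widehat P)V^\star\|_\infty \;\le\; \frac{1}{1-\gamma}\sqrt{\frac{\log(2|\Scal||\Acal|/\delta)}{2N}} \;=:\; t .
\]
All the bounds below are derived on this single event.

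For the first term $\|Q^\star-\widehat Q^{\pi^\star}\|_\infty$, since $\pi^\star$ is fixed I would use the resolvent identity expanded in the opposite order to the proof of Lemma~\ref{lemma:bounds}, namely $A^{-1}-B^{-1}=B^{-1}(B-A)A^{-1}$ with $A=I-\gamma P^{\pi^\star}$ and $B=I-\gamma\widehat P^{\pi^\star}$, which exposes the \emph{true} value function rather than $\widehat V^{\pi^\star}$:
\[
Q^{\pi^\star} - \widehat Q^{\pi^\star} \;=\; \gamma\,(I-\gamma\widehat P^{\pi^\star})^{-1}(P-\widehat P)V^{\pi^\star}.
\]
Here $V^{\pi^\star}=V^\star$ is exactly the fixed vector above. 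Applying Lemma~\ref{lemma:horizon} to $\widehat P^{\pi^\star}$ then gives $\|Q^\star-\widehat Q^{\pi^\star}\|_\infty \le \tfrac{\gamma}{1-\gamma}\|(P-\widehat P)V^\star\|_\infty \le \tfrac{\gamma}{1-\gamma}\,t \le \Delta_{\delta,N}$.

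For the second term $\|Q^\star-\widehat Q^\star\|_\infty$ the difficulty is that $\widehat Q^\star=\widehat Q^{\widehat\pi^\star}$ involves the data-dependent optimal policy $\widehat\pi^\star$, so I cannot directly apply Hoeffding against the random vector $V^{\widehat\pi^\star}$. I would sidestep this using the contraction of the Bellman optimality operators: let $T$ and $\widehat T$ denote the optimality operators of $M$ and $\widehat M$, with fixed points $Q^\star$ and $\widehat Q^\star$. Then
\[
\|Q^\star-\widehat Q^\star\|_\infty \;\le\; \|TQ^\star-\widehat TQ^\star\|_\infty + \|\widehat TQ^\star-\widehat T\widehat Q^\star\|_\infty,
\]
where the second summand is at most $\gamma\|Q^\star-\widehat Q^\star\|_\infty$ by $\gamma$-contractivity of $\widehat T$, and the first summand equals $\gamma\|(P-\widehat P)V^\star\|_\infty$ because $T$ and $\widehat T$ differ only through $P$ versus $\widehat P$ applied to the greedy value $V^\star(s')=\max_{a'}Q^\star(s',a')$ --- once again the same fixed vector. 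Rearranging gives $\|Q^\star-\widehat Q^\star\|_\infty \le \tfrac{\gamma}{1-\gamma}\,t \le \Delta_{\delta,N}$.

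The only genuine subtlety, and hence the step I would flag as the main obstacle, is the statistical dependence created by $\widehat\pi^\star$ (and, to a lesser extent, by $\widehat V^{\pi^\star}$): a careless argument would need a uniform concentration bound over data-dependent policies. The proof avoids this entirely by routing both terms through the deterministic vector $V^\star$ --- via the resolvent identity for the fixed policy $\pi^\star$, and via the fixed-point/contraction argument for $\widehat Q^\star$ --- so that a single scalar Hoeffding estimate suffices. Since both bounds are established on the same probability-$(1-\delta)$ event they hold simultaneously, and the factor-of-two slack between $t$ and $\Delta_{\delta,N}$ comfortably absorbs any looseness in the Hoeffding constant.
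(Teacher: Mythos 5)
Your proposal is correct and follows essentially the same route as the paper's proof: both rest on a single Hoeffding-plus-union bound for $(P-\widehat P)V^\star$, both handle $\|Q^\star-\widehat Q^{\pi^\star}\|_\infty$ via the resolvent identity written with $(I-\gamma\widehat P^{\pi^\star})^{-1}$ acting on the fixed vector $V^{\pi^\star}=V^\star$, and both handle $\|Q^\star-\widehat Q^\star\|_\infty$ by a triangle inequality through the intermediate point $r+\gamma\widehat P V^\star$ (your $\widehat T Q^\star$ is exactly the paper's $r+\gamma\widehat P^{\pi^\star}Q^\star$) followed by solving the resulting inequality, with the contraction step that the paper proves inline cited by you as contractivity of $\widehat T$.
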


\begin{proof}
	Note that $V^{\star}$ is a fixed vector independent with the randomness in $\wh{P}$.
	Moreover, $\|V^{\star}\|_{\infty}\le (1-\gamma)^{-2}$.
	Thus, by Hoeffding bound and a union bound over all $\cS\times \cA$, we have, with probability at least $1-\delta$,
	\begin{align*}
	\|
	( \wh{P}-  P)  V^{\star}
	\|_{\infty}\le \sqrt{\frac{2\log(2|\cS||\cA|/\delta)}{N\cdot (1-\gamma)^2}}.
	\end{align*}
	For the rest of the proof, we condition on the event that the above inequality holds.

	Next we show the first inequality.
	Note that for any $\pi$, we have,
	\begin{eqnarray*}
		Q^\pi - \widehat Q^\pi & = & (I-\gamma P^\pi)^{-1} r -
		(I-\gamma\widehat P^\pi)^{-1} r\\
		&= & (I-\gamma \wh{P}^\pi)^{-1}((I-\gamma  \wh{P}^\pi)
		-(I-\gamma {P}^\pi))  Q^\pi\\
		&= & \gamma (I-\gamma \wh{P}^\pi)^{-1}( {P}^\pi-  \wh{P}^\pi)Q^\pi\\
		&= & \gamma (I-\gamma \wh{P}^\pi)^{-1}( {P}-  \wh{P})  V^\pi\, .
	\end{eqnarray*}
	Consider $\pi^{\star}$.
	Since $(I-\gamma \wh{P}^\pi)^{-1}
	= \sum_{i=0}\gamma^i (\wh{P}^\pi)^i$ and $(\wh{P}^\pi)^i$ is a probability matrix, we have
	\begin{align*}
	\|\gamma (I-\gamma \wh{P}^\pi)^{-1}( \wh{P}-  P)  V^{\star}\|_{\infty}
	&\le \gamma \sum_{i=0}^{\infty}\|\gamma^i (\wh{P}^\pi)^i( \wh{P}-  P)  V^{\star} \|_{\infty}
	\le \gamma \sum_{i=0}^{\infty}\|\gamma^i( \wh{P}-  P)  V^{\star} \|_{\infty}\\
	&\le \frac{\gamma}{(1-\gamma)}\cdot \sqrt{\frac{2\log(2|\cS||\cA|/\delta)}{N\cdot (1-\gamma)^2}}
	\end{align*}
	as desired.
	
	Now we consider the second inequality.
	Let ${\cT}$ be the Bellman optimality operator on ${M}$, i.e., for any $V\in \RR^{\cS}$
	\begin{align*}
	\forall s\in \cS:\quad {\cT}(V)(s) &=
	\max_{a}\big[r(s, a) + {P}(\cdot~|~s,a)^\top V\big],\quad
	\text{and}\\ \forall (s,a)\in \cS\times\cA:\quad {\cT}(Q)(s,a) &=
	r(s, a) + \sum_{s'}P(s'~|~s,a) \max_{a'} Q(s',a').
	\end{align*}
	Let $\wh{\cT}$ be the Bellman optimality operator on $\wh{M}$. Further recalling our notations $P^\pi$ and $\widehat P^\pi$, we have
	
	\begin{align*}
	\|Q^{\star} - \wh{Q}^{\star}\|_{\infty} &= \|\cT Q^{\star} - \wh{\cT}\wh{Q}^{\star}\|_{\infty}\\
	&\le \|\cT Q^{\star} - r - \wh{P}^{\pi^{\star}}{Q}^{\star}\|_{\infty}
	+ \|\wh{P}^{\pi^{\star}}{Q}^{\star} + r -\wh{\cT}\wh{Q}^{\star}\|_{\infty} \\
	&= \gamma\|P^{\pi^{\star}}Q^{\star} - \wh{P}^{\pi^{\star}}Q^{\star}\|_{\infty} +  \gamma \|\wh{P}^{\pi^{\star}}Q^{\star} - \wh{P}^{\wh{\pi}^{\star}}\wh{Q}^{\star}\|_{\infty}\\
	&=
	\gamma  \|(P-\wh{P})V^{\star}\|_{\infty} + \gamma \|\wh{P}V^{\star} - \wh{P}\wh{V}^{\star}\|_{\infty}\\
	&\le \gamma  \|(P-\wh{P})V^{\star}\|_{\infty}  +
	\gamma \|V^{\star} - \wh{V}^{\star}\|_{\infty}\\
	&\le \gamma  \|(P-\wh{P})V^{\star}\|_{\infty}  +
	\gamma \|Q^{\star} - \wh{Q}^{\star}\|_{\infty}.
	\end{align*}
	Solving for $\|Q^{\star} - \wh{Q}^{\star}\|$, we complete the proof.
\end{proof}

We observe that these simple bounds are worse than what Theorem~\ref{thm:sample_complexity} posits by a factor of $\sqrt{1/(1-\gamma)}$, and removing this additional factor requires a significantly more careful analysis as we will see in the remainder of this section.

\subsection{An $s$-absorbing MDP M}

In order to improve upon the crude bounds in Lemma~\ref{lemma:crude}, we would like to directly bound the errors in our value estimates using the componentwise bounds of Lemma~\ref{lemma:bounds}. Doing so requires an understanding of quantities such as $|( P- \widehat P) \widehat
V^\star|$ and $|( P- \widehat P)
\widehat V^{\pi^\star}|$, which we will do next. However
$\widehat V^\star$ and $\widehat V^{\pi^\star}$ depend on
$\widehat P$, so that we are not able to directly apply a standard
concentration argument. We now address this challenge by
providing a method to decouple these dependencies.

For a state $s$ and a scalar $u$, define the MDP
$M_{s,u}$ as follows: $M_{s,u}$ is identical
to $M$ except that state $s$ is absorbing in $M_{s,u}$, i.e. $P_{M_{s,u} }(s|s,a)=1$ for all $a$,
and the instantaneous reward at state $s$ in $M_{s,u}$ is $(1-\gamma)
u$; the remainder of the transition model and reward function are
identical to those in $M$. In order to avoid notational clutter, we use $V^\pi_{s,u}$ to denote the value function $V^\pi_{M_{s,u}}$ and correspondingly for $Q$ and reward and transition functions. This implies that for
all policies $\pi$:
\[
V^\pi_{s,u}(s) = u,
\]
since $s$ is absorbing with instantaneous reward $(1-\gamma) u$.

For some state $s$, we will only consider $M_{s,u}$ for $u$ in a
finite set $U_s$, where
\[
  U_s\subset [V^\star(s)-\Delta_{\delta,N}\,
  V^\star(s)+\Delta_{\delta,N} ] \, .
  \]
In particular, we will set  $U_s$ to consist of evenly spaced elements in this
interval, where we set the size of $|U_s|$ appropriately later on.
As before, we let
$\widehat M_{s,u}$ denote the MDP that uses the empirical model $\widehat P$ instead of
$P$, at all non-absorbing states and abbreviate the value functions in $\widehat M_{s,u}$ as $\widehat V^\pi_{s,u}$.

\begin{lemma}\label{lemma:union_u}
Fix a state $s$, an action $a$, a finite set $U_s$, and $\delta\geq 0$. With probability greater than $1-\delta$,
it holds that for all  $u\in U_s$,
\begin{eqnarray*}
|( P_{s,a}- \widehat P_{s,a})
  \cdot \widehat V_{s,u}^\star | &\leq &
\sqrt{\frac{ 2\log(4|U_s|/\delta)}{N}} \sqrt{\mathrm{Var}_{P_{s,a}}(\widehat V_{s,u}^\star)}
+\frac{2\log(4|U_s|/\delta)}{(1-\gamma) 3N}\\
|( P_{s,a}- \widehat P_{s,a})
  \cdot \widehat V_{ s,u}^{\pi^\star} | &\leq &
\sqrt{\frac{ 2\log(4|U_s|/\delta)}{N}} \sqrt{\mathrm{Var}_{P_{s,a}}(\widehat V_{s,u}^{\pi^\star})}
+\frac{2\log(4|U_s|/\delta)}{(1-\gamma) 3N}
\end{eqnarray*}
\end{lemma}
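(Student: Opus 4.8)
The plan is to exploit the defining feature of the absorbing construction: since state $s$ is made absorbing in $M_{s,u}$, neither $\widehat V^\star_{s,u}$ nor $\widehat V^{\pi^\star}_{s,u}$ depends on the empirical transition probabilities $\widehat P_{s,\cdot}$ out of state $s$. Indeed, the transitions at $s$ are hard-wired to be self-loops with instantaneous reward $(1-\gamma)u$, so these value functions are measurable functions only of the samples drawn at pairs $(s',a')$ with $s'\neq s$. In particular, for the fixed pair $(s,a)$ under consideration, $\widehat V^\star_{s,u}$ and $\widehat V^{\pi^\star}_{s,u}$ are \emph{independent} of the $N$ samples $s'_1,\dots,s'_N\sim P_{s,a}$ that determine $\widehat P_{s,a}$. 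This is precisely the decoupling that the crude bound of Lemma~\ref{lemma:crude} lacks.

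First I would condition on the $\sigma$-algebra $\F_{-s}$ generated by all samples drawn at pairs $(s',a')$ with $s'\neq s$. Under this conditioning, for each fixed $u\in U_s$ the realized vectors $v=\widehat V^\star_{s,u}$ and $v=\widehat V^{\pi^\star}_{s,u}$ become deterministic, while $\widehat P_{s,a}\cdot v = \tfrac1N\sum_{i=1}^N v(s'_i)$ remains an empirical mean of i.i.d.\ bounded random variables with expectation $P_{s,a}\cdot v$. Since the rewards in $M_{s,u}$ lie in $[0,1]$ up to the reward $(1-\gamma)u$ at $s$, which is $O(1)$ for $u\in U_s$, we have $\|v\|_\infty\le 1/(1-\gamma)$, so Bernstein's inequality applied to $(P_{s,a}-\widehat P_{s,a})\cdot v$ yields, with probability at least $1-\delta'$,
\[
\big|(P_{s,a}-\widehat P_{s,a})\cdot v\big|\le \sqrt{\frac{2\,\mathrm{Var}_{P_{s,a}}(v)\log(2/\delta')}{N}}+\frac{2\log(2/\delta')}{3(1-\gamma)N}.
\]
The variance appearing here is that of the realized (but $\F_{-s}$-measurable) vector under the true kernel $P_{s,a}$, exactly as in the claim.

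To conclude, I would take a union bound over the $|U_s|$ values of $u$, the two value functions, and the two tails of each absolute-value bound, a total of $4|U_s|$ events, setting $\delta'=\delta/(4|U_s|)$; this produces the $\log(4|U_s|/\delta)$ factor and matches both the variance term and the additive term $\frac{2\log(4|U_s|/\delta)}{3(1-\gamma)N}$ in the statement. Because the resulting bound holds for every realization of the conditioning samples, it holds unconditionally after integrating out $\F_{-s}$. The main obstacle is the conceptual point already resolved by the construction---namely, rigorously justifying that making $s$ absorbing genuinely removes the dependence on the $(s,a)$ samples, so that the randomness of $\widehat P_{s,a}$ is independent of the vector it is dotted against and Bernstein may be invoked for a fixed vector. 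The remaining care is routine: bookkeeping the range $\|v\|_\infty\le 1/(1-\gamma)$ and the failure-probability accounting in the union bound.
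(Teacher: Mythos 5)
Your proposal is correct and follows essentially the same route as the paper's own (two-sentence) proof: the absorbing construction makes $\widehat V^\star_{s,u}$ and $\widehat V^{\pi^\star}_{s,u}$ independent of $\widehat P_{s,a}$, so Bernstein's inequality applies to each fixed $u$, and a union bound over $U_s$ finishes the argument. Your conditioning on the $\sigma$-algebra of samples at states other than $s$ simply makes explicit the independence claim the paper states directly.
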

\begin{proof}
The random variables $\widehat P_{s,a}$\footnote{Note that $P_{s,a}$ and $\widehat P_{s,a}$ are from the original MDPs $M$ and $\widehat M$ and not the absorbing versions, as the latter induce degenerate transitions in $s$ for all actions $a$.} and $\widehat V_{s,u}^\star$ are
independent. The result now follows from Bernstein's inequality along with a
union bound over all $U_s$.
\end{proof}

This independence of $\widehat P_{s,a}$ from the value function $\widehat V^\star_{s,u}$ is the biggest upshot of our construction. Note that a similar statement does not hold for $\widehat V^\star$. We next need to understand how to construct $U_s$ so that $\widehat V^\star_{s,u}$ provides a good approximation for $\widehat V^\star$, for some $u \in U_s$. The following two lemmas provide helpful properties of these absorbing
state MDPs to build towards this goal.

\begin{lemma}\label{lemma:cover1}
Let $u^* = V^\star_M(s)$ and $u^\pi = V^\pi_M(s)$. We have
\[
V^\star_M = V^\star_{s,u^\star} \,,\quad\mbox{and for all policies $\pi$},\quad
V^\pi_M = V^\pi_{M_{s,u^\pi}} \, .
\]
\end{lemma}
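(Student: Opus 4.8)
The plan is to identify each value function on the right-hand side as the \emph{unique fixed point} of the appropriate Bellman operator on the absorbing MDP $M_{s,u}$, and then simply verify that the original value functions $V^\pi_M$ and $V^\star_M$ already satisfy those fixed-point equations, once $u$ is calibrated to the value at $s$. First I would record two standard facts. The Bellman consistency operator for a fixed policy $\pi$ (mapping $V$ to the vector with entries $r(s',\pi(s'))+\gamma\sum_{s''}P(s''~|~s',\pi(s'))V(s'')$) and the Bellman optimality operator $\cT$ are both $\gamma$-contractions in $\norm{\cdot}_\infty$ on any MDP, so each has a unique fixed point, equal respectively to the $V^\pi$ and $V^\star$ of that MDP. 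I would also use the fact already stated in the construction that $V^\pi_{s,u}(s)=u$ for every $\pi$, which holds because $s$ is absorbing with per-step reward $(1-\gamma)u$.

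For the policy identity, I would check that $V^\pi_M$ is a fixed point of the Bellman consistency operator of $M_{s,u^\pi}$. At the absorbing state $s$, the only transition is the self-loop, so the operator returns $(1-\gamma)u^\pi+\gamma V^\pi_M(s)$; since $u^\pi=V^\pi_M(s)$ this equals $V^\pi_M(s)$. At every state $s'\neq s$, the reward and transition kernel of $M_{s,u^\pi}$ coincide with those of $M$, so the operator returns $r(s',\pi(s'))+\gamma\sum_{s''}P(s''~|~s',\pi(s'))V^\pi_M(s'')$, which equals $V^\pi_M(s')$ by the Bellman consistency equation in $M$ itself. Note the term $s''=s$ in this sum uses $V^\pi_M(s)=u^\pi$, exactly the absorbing value, so nothing is disturbed. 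Hence $V^\pi_M$ is the unique fixed point, i.e. $V^\pi_M=V^\pi_{M_{s,u^\pi}}$.

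For the optimal identity I would repeat the argument with the Bellman optimality operator $\cT_{s,u^\star}$ of $M_{s,u^\star}$ in place of the policy operator and $u^\star=V^\star_M(s)$. At $s$ the optimality operator still returns $(1-\gamma)u^\star+\gamma V^\star_M(s)=V^\star_M(s)$, since the $\max$ over actions is vacuous at an absorbing state; off $s$ it agrees with $\cT$ on $M$, for which $V^\star_M$ is the fixed point. Uniqueness of the fixed point of $\cT_{s,u^\star}$ then gives $V^\star_M=V^\star_{s,u^\star}$.

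The only point requiring care is this optimal case: making $s$ absorbing could a priori change which actions are optimal or create an incentive to route trajectories toward $s$. I expect this to be the main (if modest) obstacle. The resolution is that the absorbing reward is calibrated so that the total value collected at $s$ equals exactly $V^\star_M(s)$, which is precisely what the definition $u^\star=V^\star_M(s)$ buys us; the contraction/uniqueness framing then lets the fixed-point check go through without any reasoning about incentives at all.
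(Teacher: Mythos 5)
Your proof is correct and takes essentially the same approach as the paper: the paper likewise verifies that $V^\star_M$ (resp.\ $V^\pi_M$) satisfies the Bellman optimality (resp.\ consistency) equations of the absorbing MDP --- trivially at the absorbing state $s$ by the calibration $u^\star = V^\star_M(s)$, and by agreement of the transition model elsewhere --- and concludes via uniqueness of the fixed point. Your write-up merely makes the contraction/uniqueness step explicit, which the paper leaves implicit.
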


\begin{proof}
	To prove the first claim, it suffices to verify that $V^\star_M$
	satisfies the Bellman optimality conditions in
	$M_{s,u^\star}$. To see this, observe that at state $s$, the
	Bellman equations are trivially satisfied as $s$ is absorbing with
	value $u^\star = V^\star_M(s)$ at state $s$ by construction. For state $s'\neq s$,  the outgoing transition model at $s'$ in
	$M_{s,u^\star}$ is identical to that in $M$. Since $V^\star_M$
	satisfies the Bellman optimality conditions at state $s'$ in $M$, it
	must also satisfy Bellman optimality conditions at state $s'$ in $M$. The proof of
	the second claim is analogous.
\end{proof}

This lemma gives a good setting for $u$, but we also need robustness to misspecification of $u$ as we seek to construct a cover. The next lemma provides this result.

\begin{lemma}\label{lemma:cover2}
For all states $s$, $u,u' \in \R$, and policies $\pi$,
\begin{eqnarray*}
\|Q^\star_{s,u} - Q^\star_{s,u'}\|_\infty \leq |u-u'|\quad \mbox{and}\quad
\|Q^\pi_{s,u} - Q^\pi_{s,u'}\|_\infty \leq |u-u'|\,.
\end{eqnarray*}
\end{lemma}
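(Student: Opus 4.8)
The plan is to exploit the fact that $M_{s,u}$ and $M_{s,u'}$ share the same transition structure and differ only in the instantaneous reward at the absorbing state $s$. First I would record this observation precisely: since both MDPs make $s$ absorbing and leave all other dynamics untouched, the transition matrices induced by any fixed policy $\pi$ coincide, i.e. $P^\pi_{s,u} = P^\pi_{s,u'} =: P^\pi$. The reward vectors $r_{s,u}$ and $r_{s,u'}$ agree at every state-action pair except those of the form $(s,a)$, where they equal $(1-\gamma)u$ and $(1-\gamma)u'$ respectively. Consequently $\|r_{s,u} - r_{s,u'}\|_\infty = (1-\gamma)|u-u'|$.

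For the policy claim, I would write $Q^\pi_{s,u} = (I-\gamma P^\pi)^{-1} r_{s,u}$ and subtract to obtain $Q^\pi_{s,u} - Q^\pi_{s,u'} = (I-\gamma P^\pi)^{-1}(r_{s,u} - r_{s,u'})$, using that the inverse is taken with respect to the \emph{same} matrix $P^\pi$ in both MDPs. Applying Lemma~\ref{lemma:horizon} to the vector $v = r_{s,u} - r_{s,u'}$ bounds the left side by $\|r_{s,u} - r_{s,u'}\|_\infty/(1-\gamma) = |u-u'|$, which is exactly the desired inequality.

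For the optimal-value claim, I would reduce to the policy claim rather than re-run the argument through the $\max$. Let $\pi$ be an optimal policy for $M_{s,u}$, so that $Q^\star_{s,u} = Q^\pi_{s,u}$. By optimality of the best policy in $M_{s,u'}$, we have $Q^\pi_{s,u'} \le Q^\star_{s,u'}$ componentwise, and hence at any state-action pair $x$, $Q^\star_{s,u}(x) - Q^\star_{s,u'}(x) \le Q^\pi_{s,u}(x) - Q^\pi_{s,u'}(x) \le |u-u'|$, where the final step invokes the policy claim just established. Swapping the roles of $u$ and $u'$ yields the matching lower bound, and together these give $\|Q^\star_{s,u} - Q^\star_{s,u'}\|_\infty \le |u-u'|$.

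The only delicate point is the optimal-value case, since the $\max$ operator blocks a direct linear-algebra identity; the reduction through a fixed optimal policy of one MDP, combined with the componentwise optimality inequality in the other, is what sidesteps this obstacle cleanly. An alternative would be to argue via the $\gamma$-contraction of the two Bellman optimality operators, whose pointwise difference is exactly $r_{s,u} - r_{s,u'}$, but the policy reduction is more transparent and reuses the first part verbatim.
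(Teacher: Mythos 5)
Your proof is correct and follows essentially the same route as the paper's: both rest on the observation that $M_{s,u}$ and $M_{s,u'}$ share transition dynamics so that $Q^\pi_{s,u} - Q^\pi_{s,u'} = (I-\gamma P^\pi)^{-1}(r_{s,u}-r_{s,u'})$, the bound $\|r_{s,u}-r_{s,u'}\|_\infty = (1-\gamma)|u-u'|$ together with Lemma~\ref{lemma:horizon}, and, for the optimal-value claim, fixing an optimal policy of one MDP and invoking the componentwise optimality inequality in the other (with roles swapped for the reverse direction). The only difference is presentational: you prove the fixed-policy claim first and derive the optimal-value claim from it, whereas the paper argues the optimal-value case directly and notes the policy case is analogous.
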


\begin{proof}
	First observe
	\[
	\|r_{s,u} - r_{s,u'}\|_\infty=(1-\gamma)|u-u'|,
	\]
	since these two reward functions differ only in state $s$, in which case
	$r_{s,u}(s,a) =(1-\gamma)u$ and  $r_{s,u'}(s,a)=(1-\gamma)u'$.
	Let $\pi_{s,u}$ be the optimal policy in $M_{s,u}$. Note
	\begin{eqnarray*}
		Q^\star_{s,u} -Q^\star_{s,u'}
		&=&  Q^\star_{s,u} -\max_\pi (I-\gamma P_{s,u'}^\pi)^{-1} r_{s,u'}
		\leq  Q^\star_{s,u} - (I-\gamma P_{s,u'}^{\pi_{s,u}})^{-1} r_{s,u'}\\
		&\stackrel{(a)}{=}&  (I-\gamma P_{s,u'}^{\pi_{s,u}})^{-1} (r_{s,u}-r_{s,u'})
		\leq  \frac{1}{1-\gamma}\| r_{s,u'}- r_{s,u}\|_\infty=  |u-u'| ,
	\end{eqnarray*}
	where the equality $(a)$ follows since $P_{s,u}$ only depends on the state $s$ and not the value $u$. The proof of the lower bound is analogous, which completes the proof
	of the first claim. The proof of the second claim can be obtained with
	a similar argument.
\end{proof}

With these two lemmas, we now show the main result of this section.

\begin{proposition}\label{prop:cover3}
Fix a state $s$, an action $a$, a finite set $U_s$, and $\delta\geq 0$.   With probability greater than $1-2\delta$,
it holds that for all  $u\in U_s$,
\begin{eqnarray*}
|( P_{s,a}- \widehat P_{s,a})
  \cdot \widehat V^\star | &\leq &
\sqrt{\frac{ 2\log(4|U_s|/\delta)}{N}}
\sqrt{\mathrm{Var}_{P_{s,a}}(\widehat V^\star)}\\
&&+\min_{u\in U_s}|\widehat V^\star(s)-u| \left(1
+\sqrt{\frac{ 2\log(4|U_s|/\delta)}{N}} \right)
+\frac{2\log(4|U_s|/\delta)}{(1-\gamma) 3N}
\end{eqnarray*}
\begin{eqnarray*}
|( P_{s,a}- \widehat P_{s,a})
  \cdot \widehat V^{\pi^\star} | &\leq &
\sqrt{\frac{ 2\log(4|U_s|/\delta)}{N}}
\sqrt{\mathrm{Var}_{P_{s,a}}(\widehat V^{\pi^\star})}\\
&&+\min_{u\in U_s}|\widehat V^{\pi^\star} (s)-u| \left(1
+\sqrt{\frac{ 2\log(4|U_s|/\delta)}{N}} \right)
+\frac{2\log(4|U_s|/\delta)}{(1-\gamma) 3N}
\end{eqnarray*}
\end{proposition}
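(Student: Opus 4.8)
The goal is to transfer the per-$u$ bounds of Lemma~\ref{lemma:union_u} — which control $|(P_{s,a}-\widehat P_{s,a})\cdot \widehat V^\star_{s,u}|$ for each fixed $u$ in the cover $U_s$ — into a bound on the quantity we actually care about, namely $|(P_{s,a}-\widehat P_{s,a})\cdot \widehat V^\star|$, where $\widehat V^\star$ is the genuine optimal value in $\widehat M$ (which depends on $\widehat P_{s,a}$ and so cannot be handled by a direct concentration argument). The bridge is the observation from Lemma~\ref{lemma:cover1} that $\widehat V^\star = \widehat V^\star_{s,\widehat u}$ for the specific choice $\widehat u = \widehat V^\star(s)$, combined with the Lipschitz-in-$u$ stability from Lemma~\ref{lemma:cover2}. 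Since $\widehat u$ itself is random and need not lie in $U_s$, the strategy is to pick the closest grid point $u\in U_s$ to $\widehat u$ and pay the approximation error.

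Let me sketch the steps. I would first condition on the event of Lemma~\ref{lemma:union_u} (probability $\geq 1-\delta$), which holds simultaneously for all $u\in U_s$. Next, fix the grid point $u^\dagger \in U_s$ achieving $\min_{u\in U_s}|\widehat V^\star(s)-u|$, and write
\[
(P_{s,a}-\widehat P_{s,a})\cdot \widehat V^\star
= (P_{s,a}-\widehat P_{s,a})\cdot \widehat V^\star_{s,u^\dagger}
+ (P_{s,a}-\widehat P_{s,a})\cdot(\widehat V^\star - \widehat V^\star_{s,u^\dagger}).
\]
The first term is controlled by Lemma~\ref{lemma:union_u} at $u=u^\dagger$, giving the $\sqrt{(2\log(4|U_s|/\delta)/N)\,\mathrm{Var}_{P_{s,a}}(\widehat V^\star_{s,u^\dagger})}$ plus $\tfrac{2\log(4|U_s|/\delta)}{3(1-\gamma)N}$ contributions. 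The second term I would bound by noting $P_{s,a}-\widehat P_{s,a}$ is a difference of probability vectors, hence has $\ell_1$ norm at most $2$, so this term is at most $2\,\|\widehat V^\star - \widehat V^\star_{s,u^\dagger}\|_\infty$; by Lemma~\ref{lemma:cover2} applied to $\widehat M$ (equivalently $\|\widehat V^\star_{s,\widehat u}-\widehat V^\star_{s,u^\dagger}\|_\infty \leq |\widehat u - u^\dagger| = |\widehat V^\star(s)-u^\dagger|$), this is $O(|\widehat V^\star(s)-u^\dagger|)$. Finally I must replace $\mathrm{Var}_{P_{s,a}}(\widehat V^\star_{s,u^\dagger})$ by $\mathrm{Var}_{P_{s,a}}(\widehat V^\star)$, again using the perturbation bound $\|\widehat V^\star - \widehat V^\star_{s,u^\dagger}\|_\infty \le |\widehat V^\star(s)-u^\dagger|$ and Lipschitzness of the standard-deviation functional, which produces the extra $|\widehat V^\star(s)-u^\dagger|\sqrt{2\log(4|U_s|/\delta)/N}$ term in the stated bound.

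**The main obstacle.**

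The step I expect to be most delicate is reconciling the constants: the statement bundles the misspecification penalty as $\min_{u}|\widehat V^\star(s)-u|\big(1+\sqrt{2\log(4|U_s|/\delta)/N}\big)$, where the ``$1$'' must absorb the $\|\widehat V^\star - \widehat V^\star_{s,u^\dagger}\|_\infty$ contribution from the raw difference term and the $\sqrt{2\log(4|U_s|/\delta)/N}$ factor must absorb the variance-perturbation term. Getting the bare ``$1$'' (rather than a factor of $2$) requires being careful about how the $\ell_1$ bound on $P_{s,a}-\widehat P_{s,a}$ interacts with the centered form of the variance; one cleanly handles this by using $\sqrt{\mathrm{Var}_{P_{s,a}}(f)} = \min_c \|f-c\mathbf{1}\|_{L_2(P_{s,a})}$ and subtracting a common constant shift before applying the perturbation, so that only the $\ell_\infty$ deviation between $\widehat V^\star$ and $\widehat V^\star_{s,u^\dagger}$ (bounded by $|\widehat V^\star(s)-u^\dagger|$) enters. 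The union bound accounting (the $1-2\delta$ versus $1-\delta$) comes from invoking Lemma~\ref{lemma:union_u} for both $\widehat V^\star_{s,u}$ and $\widehat V^{\pi^\star}_{s,u}$; the second inequality for $\widehat V^{\pi^\star}$ is proved by the identical argument with $\pi^\star$ in place of the optimal operator, using the corresponding claims of Lemmas~\ref{lemma:cover1} and~\ref{lemma:cover2}.
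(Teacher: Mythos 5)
Your proposal follows essentially the same route as the paper's own proof: decompose $\widehat V^\star = \widehat V^\star_{s,u} + (\widehat V^\star - \widehat V^\star_{s,u})$ at a grid point $u$, apply Lemma~\ref{lemma:union_u} to the decoupled term, bound the residual via Lemmas~\ref{lemma:cover1} and~\ref{lemma:cover2} applied to $\widehat M$ (so that $\|\widehat V^\star - \widehat V^\star_{s,u}\|_\infty \le |\widehat V^\star(s)-u|$), and absorb the variance perturbation using the triangle inequality for standard deviations, with the same $1-2\delta$ accounting. The only discrepancy is your factor of $2$ (versus the stated $1$) on the $\min_{u}|\widehat V^\star(s)-u|$ term coming from the H\"older bound on $(P_{s,a}-\widehat P_{s,a})\cdot(\widehat V^\star - \widehat V^\star_{s,u})$; the paper asserts the coefficient-$1$ bound directly at the corresponding step, and this constant is in any case immaterial to how the proposition is used in Lemma~\ref{lemma:cover4}.
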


\begin{proof}
	By Lemma~\ref{lemma:union_u}, with probability greater than
	$1-\delta$, we have that for all $u\in U_s$.
	{\small
	\begin{eqnarray*}
		|( P_{s,a}- \widehat P_{s,a})\cdot \widehat V^\star |
		&=&|( P_{s,a}- \widehat P_{s,a})\cdot (\widehat V^\star-V_{s,u}^\star+V_{s,u}^\star)|\\
		&\leq& |( P_{s,a}- \widehat P_{s,a})\cdot (\widehat V^\star-V_{s,u}^\star)|
		+ |( P_{s,a}- \widehat P_{s,a})\cdot (V_{s,u}^\star)|\\
		&\leq& \|\widehat V^\star-V_{s,u}^\star\|_\infty
		+\sqrt{\frac{ 2\log(4|U_s|/\delta)}{N}} \sqrt{\mathrm{Var}_{P_{s,a}}(V_{s,u}^\star)}
		+\frac{2\log(4|U_s|/\delta)}{(1-\gamma) 3N}\\
		&\leq& \|\widehat V^\star-V_{s,u}^\star\|_\infty
		+\sqrt{\frac{ 2\log(4|U_s|/\delta)}{N}} \sqrt{\mathrm{Var}_{P_{s,a}}(\widehat V^\star -V_{s,u}^\star - \widehat V^\star)}
		+\frac{2\log(4|U_s|/\delta)}{(1-\gamma) 3N}\\
		&\leq&
		\|\widehat V^\star-V_{\widehat M_{s,u}}^\star\|_\infty \left(1
		+\sqrt{\frac{ 2\log(4|U_s|/\delta)}{N}} \right)
		+\sqrt{\frac{ 2\log(4|U_s|/\delta)}{N}} \sqrt{\mathrm{Var}_{P_{s,a}}(\widehat V^\star)}
		+\frac{2\log(4|U_s|/\delta)}{(1-\gamma) 3N}
	\end{eqnarray*}
	}
	using the triangle inequality,
	$\sqrt{\mathrm{Var}_{P_{s,a}}(V_1 + V_2)}\le \sqrt{\mathrm{Var}_{P_{s,a}}(V_1)} + \sqrt{\mathrm{Var}_{P_{s,a}}(V_2)}$.
	
	By Lemmas~\ref{lemma:cover1} and ~\ref{lemma:cover2},
	\[
	\|\widehat V^\star-V_{s,u}^\star\|_\infty=
	\|\widehat V_{s, \widehat V^\star(s)}^\star-V_{s,u}^\star\|_\infty \leq |\widehat V^\star(s)-u|\,.
	\]
	Since the above holds for all $u\in U_s$, we may take the best
	possible choice, which completes the proof of the first claim. The proof of the second claim is analogous.
\end{proof}

The proposition, combined with an accounting of the discretization level yields the following result.

\begin{lemma}\label{lemma:cover4}
With probability greater than $1-\delta$,
{\small
\begin{eqnarray*}
|( P- \widehat P) \widehat V^\star | &\leq &
\sqrt{\frac{ 8\log(8 |\Scal||\Acal|/((1-\gamma)\delta))}{N}}
\sqrt{\mathrm{Var}_{P}(\widehat V^\star)}
+\Delta^\prime_{\delta,N}\mathds{1}\\
|( P- \widehat P)
\widehat V^{\pi^\star} | &\leq &
\sqrt{\frac{ 8\log(8 |\Scal||\Acal|/((1-\gamma)\delta))}{N}}
\sqrt{\mathrm{Var}_{P}(\widehat V^{\pi^\star})}
+\Delta^\prime_{\delta,N}\mathds{1}
\end{eqnarray*}
}
where
\[
\Delta^\prime_{\delta,N}= \sqrt{\frac{c\log(c |\Scal||\Acal|/((1-\gamma)\delta))}{N}}
+\frac{c\log(c |\Scal||\Acal|/((1-\gamma)\delta))}{(1-\gamma) N}
\]
with $c$ being an absolute constant.
\end{lemma}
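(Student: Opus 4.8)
The plan is to obtain the vector bound by applying Proposition~\ref{prop:cover3} at every state-action pair, taking a union bound over the $|\Scal||\Acal|$ pairs, and absorbing the discretization term $\min_{u\in U_s}|\widehat V^\star(s)-u|$ into $\Delta'_{\delta,N}$ through a judicious choice of the grid $U_s$. First I would fix the resolution of $U_s$: recall $U_s$ consists of evenly spaced points inside the deterministic interval $[V^\star(s)-\Delta_{\delta,N},\,V^\star(s)+\Delta_{\delta,N}]$ of width $2\Delta_{\delta,N}$, so if $|U_s|=m$ then the distance from any point of the interval to the nearest grid point is at most $\Delta_{\delta,N}/(m-1)$. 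To make the per-coordinate discretization error comparable to the concentration terms I would take $m=\Theta\big(1/(1-\gamma)^2\big)$; since $\Delta_{\delta,N}=\tfrac{\gamma}{(1-\gamma)^2}\sqrt{2\log(2|\Scal||\Acal|/\delta)/N}$, this yields a discretization error of order $\gamma\sqrt{\log(\cdot)/N}$, which is dominated by the first summand of $\Delta'_{\delta,N}$, while keeping $\log|U_s|=O(\log\tfrac{1}{1-\gamma})$ so the logarithmic factors do not blow up.

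To legitimately invoke the discretization estimate I must first ensure that $\widehat V^\star(s)$ and $\widehat V^{\pi^\star}(s)$ actually lie in the interval defining $U_s$, so that $\min_{u\in U_s}|\widehat V^\star(s)-u|\le \Delta_{\delta,N}/(m-1)$. I would obtain this by conditioning on the event of Lemma~\ref{lemma:crude}: since $\|Q^\star-\widehat Q^\star\|_\infty\le\Delta_{\delta,N}$ and the $\max$ operator is $1$-Lipschitz, $|V^\star(s)-\widehat V^\star(s)|\le\Delta_{\delta,N}$, and likewise $|V^\star(s)-\widehat V^{\pi^\star}(s)|=|Q^\star(s,\pi^\star(s))-\widehat Q^{\pi^\star}(s,\pi^\star(s))|\le\Delta_{\delta,N}$. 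Crucially, $U_s$ itself is deterministic (it depends only on $V^\star(s)$ and $\Delta_{\delta,N}$), so this conditioning does not interfere with the independence exploited in Proposition~\ref{prop:cover3}.

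For the probability budget I would split $\delta$ as follows: allocate $\delta/2$ to the event of Lemma~\ref{lemma:crude}, and apply Proposition~\ref{prop:cover3} at each $(s,a)$ with failure parameter $\delta'=\delta/(4|\Scal||\Acal|)$, so that a union bound over the $|\Scal||\Acal|$ pairs (each contributing $2\delta'$) costs another $\delta/2$, for total failure probability at most $\delta$. Stacking the resulting per-$(s,a)$ inequalities gives the claimed vector bounds, with the scalar variances $\sqrt{\mathrm{Var}_{P_{s,a}}(\widehat V^\star)}$ assembled into $\sqrt{\mathrm{Var}_P(\widehat V^\star)}$. It then remains to simplify logarithms: with $|U_s|=\Theta(1/(1-\gamma)^2)$ the argument $4|U_s|/\delta'=\Theta\big(|\Scal||\Acal|/((1-\gamma)^2\delta)\big)$, and since $\log\big(c_1|\Scal||\Acal|/((1-\gamma)^2\delta)\big)\le 4\log\big(8|\Scal||\Acal|/((1-\gamma)\delta)\big)$ for all admissible parameters, the coefficient $\sqrt{2\log(4|U_s|/\delta')/N}$ is bounded by $\sqrt{8\log(8|\Scal||\Acal|/((1-\gamma)\delta))/N}$ as stated; the Bernstein term $\tfrac{2\log(4|U_s|/\delta')}{3(1-\gamma)N}$ becomes the second summand of $\Delta'_{\delta,N}$ and the controlled discretization term the first. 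The step I would be most careful about is precisely this accounting: one must choose the grid coarse enough that $\log|U_s|$ contributes only an $O(\log\tfrac{1}{1-\gamma})$ term, rather than the $O(\log N)$ term a naive fine grid would introduce, yet fine enough that the discretization error does not dominate, and $m=\Theta(1/(1-\gamma)^2)$ threads this needle.
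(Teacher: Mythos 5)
Your proposal is correct and follows essentially the same route as the paper's proof: the same grid $U_s$ of size $\Theta\big(1/(1-\gamma)^2\big)$ centered at $V^\star(s)$ with radius given by the crude bound of Lemma~\ref{lemma:crude}, the same union-bound split of $\delta$ between the crude-bound event and Proposition~\ref{prop:cover3} applied at every $(s,a)$, and the same absorption of the discretization and Bernstein terms into $\Delta^\prime_{\delta,N}$. Your probability accounting (per-pair parameter $\delta/(4|\Scal||\Acal|)$) is if anything slightly more careful than the paper's, a difference that only affects the absolute constant $c$.
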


\begin{proof}
	We take $U_s$ to be the
	evenly spaced elements in the interval
	$[V^\star(s)-\Delta_{\delta/2,N}\,  V^\star(s)+\Delta_{\delta/2,N} ]$,
	and we take the size of $U_s$ to be $|U_s|=\frac{1}{(1-\gamma)^2}$.
	By Lemma~\ref{lemma:crude}, with probability greater than
	$1-\delta/2$, we have $\widehat V^\star(s)\in
	[V^\star(s)-\Delta_{\delta/2,N}\, V^\star(s)+\Delta_{\delta/2,N} ]$
	for all $s$. This implies:
	\[
	\min_{u\in U_s}|\widehat V^\star(s)-u|
	\leq \frac{2\Delta_{\delta/2,N}}{|U_s|-1}
	= \frac{2}{|U_s|-1}\frac{\gamma}{(1-\gamma)^2} \sqrt{\frac{4\log(4|\Scal||\Acal|/\delta)}{N}}
	\leq  4\gamma\sqrt{\frac{4\log(4|\Scal||\Acal|/\delta)}{N}}
	\]
	where we have used that that $\widehat V^\star(s)$ will land in one of $|U_s|-1$ evenly sized
	sub-intervals of length $2\Delta_{\delta/2,N}/(|U_s|-1)$.  Now we use $\delta/(2|\Scal||\Acal|)$, so
	that the claims in Proposition~\ref{prop:cover3} hold with probability greater than
	$1-\delta/2$ for all state action pairs. The first claim follows by substitution and noting that
	probability  of either event failing is less than $\delta/2$. The proof
	of the second claim is analogous; note that Lemma~\ref{lemma:crude} and
	Proposition~\ref{prop:cover3} hold simultaneously with regards to the both
	claims regarding $\pi^\star$ and $\widehat \pi^\star$ so no further
	modifications to the failure probability are required.
\end{proof}

\subsection{The proof of Theorem~\ref{thm:sample_complexity}}
\label{subsection:main_proof}

Theorem~\ref{thm:sample_complexity} immediately follows from the
following lemma combined with Lemma~\ref{lemma:bounds}.

\begin{lemma}\label{lemma:final-bound}
Let $\pihat$ be any policy satisfying the condition of
Theorem~\ref{thm:sample_complexity}. Then we have
{\small
\begin{eqnarray*}
\|Q^{\pihat} - \widehat Q^{\pihat}\|_\infty
&\leq& \frac{\gamma}{1-\alpha_{\delta,N}}
\left(\sqrt{ \frac{c }{(1-\gamma)^3}\frac{\log(c|\Scal||\Acal|/((1-\gamma)\delta))}{N}}
+\frac{c \log(c|\Scal||\Acal|/((1-\gamma)\delta))}{(1-\gamma)^2 N} \right)\\
&&+\frac{1}{1-\alpha_{\delta,N}} \cdot \frac{2\gamma\epsopt}{1-\gamma}
\left(1+\sqrt{\frac{\log(c|\Scal||\Acal|/((1-\gamma)\delta))}{N}} \right)\\
\| Q^\star-\widehat Q^{\pi^\star}\|_\infty
&\leq&\frac{\gamma}{1-\alpha_{\delta,N}}
\left(\sqrt{ \frac{c}{(1-\gamma)^3}\frac{\log(c|\Scal||\Acal|/((1-\gamma)\delta))}{N}}
+\frac{c \log(c|\Scal||\Acal|/((1-\gamma)\delta))}{(1-\gamma)^2 N}\right)\\
\end{eqnarray*}
}
where $c$ is an absolute constant and where:
\[
\alpha_{\delta,N} = \frac{\gamma}{1-\gamma} \sqrt{\frac{ 8\log(8|\Scal||\Acal|/((1-\gamma)\delta))}{N}}\, .
\]
\end{lemma}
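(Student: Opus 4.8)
The plan is to bound the two quantities $\|Q^\star-\widehat Q^{\pi^\star}\|_\infty$ and $\|Q^{\pihat}-\widehat Q^{\pihat}\|_\infty$ by starting from the componentwise identity $Q^\pi-\widehat Q^\pi=\gamma(I-\gamma P^\pi)^{-1}(P-\widehat P)\widehat V^\pi$ of Lemma~\ref{lemma:bounds} and then substituting the concentration bound of Lemma~\ref{lemma:cover4}. Since $(I-\gamma P^\pi)^{-1}=\sum_{i\geq 0}\gamma^i(P^\pi)^i$ has nonnegative entries, any componentwise upper bound on $|(P-\widehat P)\widehat V^\pi|$ propagates through it; I will use $(I-\gamma P^\pi)^{-1}\mathds{1}=\tfrac{1}{1-\gamma}\mathds{1}$ to handle the additive $\Delta'_{\delta,N}\mathds{1}$ term and Lemma~\ref{lemma:var_add} to handle the variance term. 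Throughout write $L=\log\!\big(c|\Scal||\Acal|/((1-\gamma)\delta)\big)$ for the logarithmic factor.

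For the second (easier) claim I apply this with $\pi=\pi^\star$, so that $\widehat V^\pi=\widehat V^{\pi^\star}$ is exactly the object controlled by the second part of Lemma~\ref{lemma:cover4}. After hitting the bound with $\gamma(I-\gamma P^{\pi^\star})^{-1}$, the $\Delta'_{\delta,N}\mathds{1}$ piece contributes $\tfrac{\gamma\Delta'_{\delta,N}}{1-\gamma}$, while the variance piece is $\gamma\sqrt{8L/N}\,\|(I-\gamma P^{\pi^\star})^{-1}\sqrt{\mathrm{Var}_P(\widehat V^{\pi^\star})}\|_\infty$. To invoke Lemma~\ref{lemma:var_add}, which needs the variance of $V^{\pi^\star}_M$, I use the subadditivity of the standard deviation (the same inequality $\sqrt{\mathrm{Var}_P(V_1+V_2)}\le\sqrt{\mathrm{Var}_P(V_1)}+\sqrt{\mathrm{Var}_P(V_2)}$ used in Proposition~\ref{prop:cover3}) to write $\sqrt{\mathrm{Var}_P(\widehat V^{\pi^\star})}\le\sqrt{\mathrm{Var}_P(V^{\pi^\star}_M)}+\|\widehat V^{\pi^\star}-V^{\pi^\star}_M\|_\infty\mathds{1}$, and then bound $\|\widehat V^{\pi^\star}-V^{\pi^\star}_M\|_\infty\le\|\widehat Q^{\pi^\star}-Q^\star\|_\infty$, which is the very quantity being estimated. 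This produces the self-referential inequality $x\le\alpha_{\delta,N}x+\gamma\sqrt{16L/((1-\gamma)^3N)}+\tfrac{\gamma\Delta'_{\delta,N}}{1-\gamma}$ with $x:=\|Q^\star-\widehat Q^{\pi^\star}\|_\infty$; solving for $x$ and folding the lower-order pieces of $\Delta'_{\delta,N}$ into the stated constant gives the second claim.

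For the first claim the only additional step is that $\widehat V^{\pihat}$ is not covered by Lemma~\ref{lemma:cover4}, so I replace it by $\widehat V^\star$. Using $\epsopt$-optimality, $\|\widehat V^{\pihat}-\widehat V^\star\|_\infty\le\|\widehat Q^{\pihat}-\widehat Q^\star\|_\infty\le\epsopt$, and since $P,\widehat P$ are row-stochastic, $|(P-\widehat P)(\widehat V^{\pihat}-\widehat V^\star)|\le 2\epsopt\mathds{1}$. Applying Lemma~\ref{lemma:cover4} to $\widehat V^\star$ and then $\gamma(I-\gamma P^{\pihat})^{-1}$, I again split $\sqrt{\mathrm{Var}_P(\widehat V^\star)}\le\sqrt{\mathrm{Var}_P(V^{\pihat}_M)}+\|\widehat V^\star-V^{\pihat}_M\|_\infty\mathds{1}$, use Lemma~\ref{lemma:var_add} on the matched pair $(\pihat,V^{\pihat}_M)$, and bound the correction by $\|\widehat V^\star-V^{\pihat}_M\|_\infty\le\|\widehat V^\star-\widehat V^{\pihat}\|_\infty+\|\widehat V^{\pihat}-V^{\pihat}_M\|_\infty\le\epsopt+y$ with $y:=\|Q^{\pihat}-\widehat Q^{\pihat}\|_\infty$. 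Collecting everything yields $y\le\alpha_{\delta,N}y+\gamma\sqrt{16L/((1-\gamma)^3N)}+\tfrac{\gamma\Delta'_{\delta,N}}{1-\gamma}+\tfrac{2\gamma\epsopt}{1-\gamma}+\alpha_{\delta,N}\epsopt$; solving for $y$ and grouping the two $\epsopt$ contributions as $\tfrac{2\gamma\epsopt}{1-\gamma}\big(1+\sqrt{L/N}\big)$ (absorbing absolute constants into $L$) recovers the first claim.

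The main obstacle is the variance term: the resolvent uses the policy $\pihat$ (resp.\ $\pi^\star$) while the value function inside $\mathrm{Var}_P(\cdot)$ is the empirical optimal value $\widehat V^\star$ (resp.\ $\widehat V^{\pi^\star}$), so Lemma~\ref{lemma:var_add} does not apply directly. The fix—shifting to the matched value function $V^{\pihat}_M$ at the cost of an $\infty$-norm correction that equals, up to an additive $\epsopt$, the quantity being bounded—is precisely what creates the self-bounding inequality. The payoff is that the self-coupling coefficient is exactly $\alpha_{\delta,N}=\tfrac{\gamma}{1-\gamma}\sqrt{8L/N}$, which is $o(1)$ in the sample regime $N\gtrsim L/(1-\gamma)^2$ and can therefore be absorbed into the $(1-\alpha_{\delta,N})^{-1}$ prefactor; this is exactly the mechanism that removes the extra $\sqrt{1/(1-\gamma)}$ factor present in the crude bound of Lemma~\ref{lemma:crude}.
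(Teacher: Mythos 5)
Your proposal follows the paper's own proof essentially step for step: the componentwise identity of Lemma~\ref{lemma:bounds}, the replacement of $\wh{V}^{\pihat}$ by $\wh{V}^\star$ at a cost of $2\gamma\epsopt/(1-\gamma)$, the concentration bound of Lemma~\ref{lemma:cover4} applied to $\wh{V}^\star$ (resp.\ $\wh{V}^{\pi^\star}$), a splitting of $\sqrt{\mathrm{Var}_P(\cdot)}$ so that Lemma~\ref{lemma:var_add} can be invoked on the matched policy/value pair, and the resulting self-bounding inequality with coefficient $\alpha_{\delta,N}$ solved to produce the $(1-\alpha_{\delta,N})^{-1}$ prefactor. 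Your two-way variance split (matched term plus an $\infty$-norm correction bounded by $\epsopt+y$) is the paper's three-way split grouped differently, and the constant bookkeeping matches.

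There is, however, one genuinely false step: the claim $\|\wh{V}^{\pihat}-\wh{V}^\star\|_\infty\le\|\wh{Q}^{\pihat}-\wh{Q}^\star\|_\infty$. This compares \emph{different policies in the same MDP}, and no such inequality holds. Concretely, take $\wh{M}$ with states $s_1,s_2,s_3$, where $s_2$ is absorbing with reward $0$, $s_3$ is absorbing with reward $1$, and at $s_1$ both actions give reward $0$, with $a_1$ leading to $s_3$ and $a_2$ leading to $s_2$. If $\pihat(s_1)=a_2$, then $\wh{Q}^{\pihat}=\wh{Q}^\star$ exactly (the Q-condition holds with $\epsopt=0$), yet $\wh{V}^\star(s_1)-\wh{V}^{\pihat}(s_1)=\gamma/(1-\gamma)$: a policy that is not greedy with respect to its own $Q$-function can have an optimal $Q$-function while being far from optimal in value. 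So the bound $\|\wh{V}^{\pihat}-\wh{V}^\star\|_\infty\le\epsopt$, which you need both for the term $|(P-\wh{P})(\wh{V}^{\pihat}-\wh{V}^\star)|\le 2\epsopt\mathds{1}$ and inside the variance correction, cannot be \emph{derived} from the stated Q-hypothesis; it must be assumed. The paper's proof silently uses the very same bound, but it is licensed there by the optimization-oracle definition, which requires \emph{both} $\|V^{\pi}_{\Mhat}-V^\star_{\Mhat}\|_\infty\le\epsopt$ and $\|Q^{\pi}_{\Mhat}-Q^\star_{\Mhat}\|_\infty\le\epsopt$ of the returned policy (the theorem's ``i.e.'' clause mentioning only the Q-condition is an imprecision of the paper). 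Your proof becomes correct, and coincides with the paper's, once you invoke the V-guarantee of the oracle as a hypothesis instead of attempting to deduce it from the Q-guarantee.
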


\begin{proof}
We have:
\begin{eqnarray*}
&&\|Q^{\pihat} - \widehat Q^{\pihat}\|_\infty \\
&\stackrel{(a)}{=}&
\gamma  \|(I-\gamma P^{\pihat})^{-1} ( P- \widehat P) \widehat V^{\pihat}\|_\infty\\
&\stackrel{(b)}{\leq}&
\gamma  \|(I-\gamma P^{\pihat})^{-1} ( P- \widehat P) \widehat
V^\star\|_\infty + \gamma \norm{(I-\gamma P^{\pi})^{-1}(P-\wh{P}) (\wh{V}^{\pihat} -
\wh{V}^{\star})}_\infty,\\
&\stackrel{(c)}{\leq}&
\gamma  \|(I-\gamma P^{\pihat})^{-1} ( P- \widehat P) \widehat V^\star\|_\infty
+\frac{2\gamma\epsopt}{1-\gamma}\\
&\stackrel{(d)}{\leq}&
\gamma  \|(I-\gamma P^{\pihat})^{-1} \big|( P- \widehat P) \widehat V^\star\big|\|_\infty
+\frac{2\gamma\epsopt}{1-\gamma} ,
\end{eqnarray*}
where (a) uses Lemma~\ref{lemma:bounds}; (b) is the triangle
inequality; (c)  uses Lemma~\ref{lemma:horizon}; (d) uses that $(I-\gamma P^{\widehat
  \pi^\star})^{-1}$ has all positive entries.

Focusing on the first term, we see that
{\small
\begin{eqnarray*}
&&\|Q^{\pihat} - \widehat Q^{\pihat}\|_\infty  \\
&\stackrel{(c)}{\leq}&
\gamma \sqrt{\frac{ 8\log(8 |\Scal||\Acal|/((1-\gamma)\delta))}{N}} \left\|  (I-\gamma P^{\pihat})^{-1}
\sqrt{\mathrm{Var}_{P}(\widehat V^\star)}\right\|_\infty
+\frac{\gamma \Delta^\prime_{\delta,N}}{1-\gamma}
+\frac{2\gamma\epsopt}{1-\gamma}  \\
&\stackrel{(d)}{\leq}&
\gamma \sqrt{\frac{ 8\log(8 |\Scal||\Acal|/((1-\gamma)\delta))}{N}} \left\|  (I-\gamma P^{\pihat})^{-1}
\left(\sqrt{\var_{P}(V^{\pihat})}
+\sqrt{\var_{P}(V^{\pihat}- \widehat V^{\pihat})}\right) \right\|_\infty
\\
&&\qquad\qquad + \gamma \sqrt{\frac{ 8\log(8 |\Scal||\Acal|/((1-\gamma)\delta))}{N}} \left\|  (I-\gamma P^{\pihat})^{-1} \sqrt{\var_{P}(\widehat V^{\pihat}-\widehat V^\star)}\right\|_\infty
 +\frac{\gamma \Delta^\prime_{\delta,N}}{1-\gamma}
+\frac{2\gamma\epsopt}{1-\gamma}  \\
&\stackrel{(e)}{\leq}&
\gamma \sqrt{\frac{ 8\log(8 |\Scal||\Acal|/((1-\gamma)\delta))}{N}}
\left(\sqrt{\frac{2}{(1-\gamma)^3}}
+\frac{ \sqrt{ \|V^{\pihat}-\widehat V^{\pihat}\|_\infty^2}}{1-\gamma} +
	\frac{\epsopt}{1-\gamma}\right)
+\frac{\gamma \Delta^\prime_{\delta,N}}{1-\gamma}
+\frac{2\gamma\epsopt}{1-\gamma}  \\
&\leq&
\gamma \sqrt{\frac{ 8\log(8 |\Scal||\Acal|/((1-\gamma)\delta))}{N}}
\left(\sqrt{\frac{2}{(1-\gamma)^3}}
+\frac{ \|Q^{\widehat \pi}-\widehat Q^{\widehat\pi}\|_\infty}{1-\gamma} +
	\frac{\epsopt}{1-\gamma}\right)
+\frac{\gamma \Delta^\prime_{\delta,N}}{1-\gamma}
+\frac{2\gamma\epsopt}{1-\gamma}  \\
&=&
\gamma \sqrt{\frac{ 8\log(8 |\Scal||\Acal|/((1-\gamma)\delta))}{N}}
\left(\sqrt{\frac{2}{(1-\gamma)^3}}
+\frac{ \|Q^{\widehat \pi}-\widehat Q^{\widehat\pi}\|_\infty}{1-\gamma}
\right)
+\frac{\gamma \Delta^\prime_{\delta,N}}{1-\gamma}\\
&&+\frac{2\gamma\epsopt}{1-\gamma}
\left(1+\sqrt{\frac{ 8\log(8 |\Scal||\Acal|/((1-\gamma)\delta))}{N}}\right) \, ,
\end{eqnarray*}
}
where the inequality (c) uses Lemma~\ref{lemma:cover4}; (d) uses
$\sqrt{\var_{P}(X+Y)} = \sqrt{\E_{P}[(X + Y - \E_{P}[X + Y])^2]} \leq \sqrt{\var_P(X)} + \sqrt{\var_P(Y)}$, by triangle inequality of norms, using $\sqrt{\E_P[Z^2]}$ as the norm;
(e) uses Lemma~\ref{lemma:var_add}.
Solving for $\|Q^{\widehat \pi}-\widehat Q^{\widehat\pi}\|_\infty$
proves the first claim. The proof of the second claim is analogous.
\end{proof}

Let us now show that Theorem~\ref{thm:sample_complexity} follows from
this Lemma.
From the condition on $\pihat$ in the theorem statement, along with
Lemma~\ref{lemma:bounds}, we have
\[
Q^{\pihat} \geq Q^\star - \|Q^{\pihat} - \widehat Q^{\pihat}\|_\infty
- \epsopt - \|\widehat Q^{\pi^\star} - Q^\star\|_\infty.
\]

The condition on $N$ in Theorem~\ref{thm:sample_complexity} (for an
appropriately
chosen absolute contant) implies that
$\alpha_{\delta,N} = \frac{\gamma}{1-\gamma}
\sqrt{\frac{8\log(8|\Scal||\Acal|/((1-\gamma)\delta))}{N}} <
1/2$. This and Lemma~\ref{lemma:final-bound} implies:
\[
Q^{\pihat} \geq Q^\star -
\, 4\gamma
\left(\sqrt{ \frac{1 }{(1-\gamma)^3}\cdot
	\frac{\log(|\Scal||\Acal|/((1-\gamma)\delta))}{N}}
+\frac{ c\cdot \log(|\Scal||\Acal|/((1-\gamma)\delta))}{(1-\gamma)^2 N} \right)
- \frac{8\gamma\epsopt}{1-\gamma} - \epsopt .
\]
Plugging in the choice of $N$ in Theorem~\ref{thm:sample_complexity}
(where the absolute constant in Theorem~\ref{thm:sample_complexity}
need not be the same as that in Lemma~\ref{lemma:bounds})
completes the proof of the theorem.

\section{Conclusion}

This paper sheds new light on a long-studied basic question in reinforcement learning, which is that of a good approach to planning, given an approximate model of the world. While this is a fundamental question in itself, previous advances have also resulted in improved algorithms for harder questions such as sample-efficient exploration. For instance, the Bellman structure of variances in an MDP, observed in~\citet{azar2013minimax} has subsequently formed a crucial component of minimax optimal exploration algorithms~\citep{azar2017minimax, jin2018q, zanette2019tighter, wainwright2019stochastic}. We hope that the new technical components in our work can be similarly reused in broader contexts in future work, beyond their utility in analyzing sparse, model-based planning.

\subsection*{Acknowledgments}
Sham Kakade thanks Rong Ge for numerous helpful discussions.
We thank Csaba Szepesvari, Kaiqing Zhang, and 
Mohammad Gheshlaghi Azar for helpful discussions and pointing out typos in the initial version of the paper.
S. K. gratefully acknowledges funding from the Washington Research
Foundation for Innovation in Data-intensive Discover, the ONR award
N00014-18-1-2247, and NSF Award CCF-1703574.

\bibliographystyle{apalike}
\bibliography{ref}
\end{document}